\newcommand{\bv}{\mathbf{v}}
\newcommand{\bw}{\mathbf{w}}
\newcommand{\bx}{\mathbf{x}}
\newcommand{\by}{\mathbf{y}}
\newcommand{\bz}{\mathbf{z}}
\newcommand{\bQ}{\mathbf{Q}}
\DeclareMathOperator*{\argmin}{arg\,min}
\DeclareMathOperator*{\argmax}{arg\,max}
\theoremstyle{definition}
\newtheorem{definition}{Definition}
\newtheorem{theorem}{Theorem}
\newtheorem{proposition}{Proposition}
\definecolor{darkblue}{rgb}{0.0, 0.0, 0.55}
\begin{document}

%

%
\runningauthor{Chen, Chlenski, Munyuza, Moretti, Naesseth, Pe'er}

\twocolumn[

\aistatstitle{Variational Combinatorial Sequential Monte Carlo for Bayesian Phylogenetics in Hyperbolic Space}

\aistatsauthor{ Alex Chen$^\ddagger$ \And Phillipe Chlenski$^\ddagger$ \And  Kenneth Munyuza$^\ddagger$ \AND  Antonio Khalil Moretti$^{\dagger}$ \And Christian A. Naesseth$^{\circ}$ \And  Itsik Pe'er$^\ddagger$ }

\aistatsaddress{ Spelman College$^\dagger$ \And University of Amsterdam$^{\circ}$ \And Columbia University$^\ddagger$   } ]

\begin{abstract}
Hyperbolic space naturally encodes hierarchical structures such as phylogenies (binary trees), where inward-bending geodesics reflect paths through least common ancestors, and the exponential growth of neighborhoods mirrors the super-exponential scaling of topologies. This scaling challenge limits the efficiency of Euclidean-based approximate inference methods. Motivated by the geometric connections between trees and hyperbolic space, we develop novel hyperbolic extensions of two sequential search algorithms: Combinatorial and Nested Combinatorial Sequential Monte Carlo (\textsc{Csmc} and \textsc{Ncsmc}). Our approach introduces consistent and unbiased estimators, along with variational inference methods (\textsc{H-Vcsmc} and \textsc{H-Vncsmc}), which outperform their Euclidean counterparts. Empirical results demonstrate improved speed, scalability and performance in high-dimensional phylogenetic inference tasks.
\end{abstract}

\section{Introduction}
Phylogenetic inference, a key component of molecular biology, involves reconstructing ancestral relationships between organisms by modeling their evolutionary histories as trees. However, this task becomes computationally challenging due to the super-exponential scaling of tree topologies, which increases as \( (2N - 3)!! \) with the number of organisms. This scaling challenge often limits the efficiency of traditional approximate inference methods which rely on Euclidean geometry.

Hyperbolic space naturally encodes hierarchical structures  where inward-bending geodesics reflect paths through least common ancestors in trees, and the exponential growth of neighborhoods mirrors the super-exponential scaling of tree topologies. For example, \cite{sarkar} demonstrates that trees can be embedded into the Poincaré disk, a two-dimensional model of hyperbolic space, with arbitrarily low distortion. Motivated by geometric connections between trees and hyperbolic space, we develop novel hyperbolic extensions of two sequential search algorithms, Combinatorial and Nested Combinatorial Sequential Monte Carlo (\textsc{Csmc} and \textsc{Ncsmc})~\citep{csmc,pmlr-v161-moretti21a}. Our approach introduces consistent and unbiased estimators used to form approximate posteriors and variational inference methods (\textsc{H-Vcsmc} and \textsc{H-Vncsmc}), outperforming their Euclidean counterparts. By leveraging the structure of hyperbolic geometry, both methods significantly enhance speed, scalability and performance in high-dimensional phylogenetic Inference tasks.
Our contributions are as follows:
\begin{enumerate}
    \item We develop novel hyperbolic extensions of the \textsc{Csmc} and \textsc{Ncsmc} algorithms (\textsc{H-Csmc} and \textsc{H-Ncsmc}). We define proposal distributions for phylogenetic inference in hyperbolic space and prove that the resulting estimators are consistent and unbiased.
    \item We utilize these estimators to construct two novel approximate posteriors and corresponding variational inference methods, referred to as \textsc{H-Vcsmc} and \textsc{H-Vncsmc}.
    \item We empirically validate our methods on high-dimensional phylogenetic inference tasks, showing that they outperform existing approximate Bayesian Inference techniques, in some cases achieving speedups of 20 to 50 times by utilizing \textsc{Gpu} acceleration.
\end{enumerate}

\subsection{Related Work} 
There is a substantial body of work on Bayesian phylogenetic inference, including local search methods such as MCMC (e.g., MrBayes \citep{10.1093/bioiNformatics/17.8.754}), \textsc{ppHmc}~\citep{pmlr-v70-dinh17a}, and particle MCMC \citep{wang2020particle}. Sequential search methods include \textsc{Csmc}~\citep{csmc} and \textsc{Ncsmc}~\citep{pmlr-v161-moretti21a}. A number of Variational Inference (VI) techniques have also been developed. \textsc{Vcsmc}~\citep{pmlr-v161-moretti21a} is based on sequential search, while the majority of approaches are based on standard VI \citep{zhang2018variational,zhang2020improved,NEURIPS2022_5e956fef,NEURIPS2023_732c5757,zhang2023learnable}. 

Recent work has explored phylogenetic inference in hyperbolic space through neighbor joining. \cite{macaulay2024differentiable} introduced a differentiable variant of neighbor joining in hyperbolic space, while GeoPhy~\citep{NEURIPS2023_732c5757} combines hyperbolic leaf node embeddings with neighbor-joining for tree construction. Our method takes a probabilistic approach, using sequential node resampling based on conditional likelihoods to explore multiple tree structures simultaneously through variational sequential search in hyperbolic space.
Our work builds on two recent advancements in machine learning within hyperbolic space. \cite{NIPS2017_59dfa2df} proposed a method for embedding hierarchical structures in the Poincaré disk. Additionally, \cite{pmlr-v97-nagano19a} developed the \textit{Wrapped Normal}, a distribution on hyperbolic space with an analytically tractable density and differentiable parameters.

\vspace{-1em}
\section{Background}
We review key concepts from hyperbolic geometry and approximate Bayesian phylogenetic inference.
\vspace{-1em}

\subsection{Geometry in Riemannian Manifolds}

\textit{Hyperbolic geometry} is a type of non-Euclidean geometry defined by constant negative curvature. This geometry can be represented through four equivalent models: the hyperboloid (or Lorentz/Minkowski model), the Poincaré disk model, the Poincaré half-plane model, and the Klein model. While our approach applies to each of these representations due to their equivalence, we focus on the Poincaré disk model due to its intuitive geometric structure and simplicity for visualization.

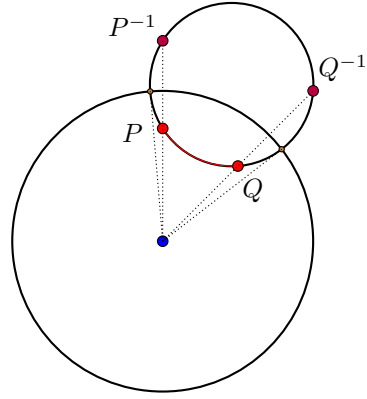
\begin{figure}
\begin{tikzpicture}[scale=2]

    \draw[thick, black] (0,0) circle [radius=1];
    \draw[thick, black] (0.4583333333333333, 1.0416666666666665) circle [radius=0.5432668671002207];
    
    \coordinate (P) at (0,0.75);
    \coordinate (Q) at (0.5,0.5);
    \coordinate (C) at (0,0);
    \coordinate (Pinv) at (0.0, 1.3333333333333333);
    \coordinate (Qinv) at (1.0, 1.0);
    \coordinate (CGeo) at (0.46,1.04);
    \coordinate (Pinter) at (-0.08305646147470835, 0.9965448430488716);
    \coordinate (Qinter) at (0.7908312604023223, 0.612034245422978);    
    \def\radius{0.54}
    
    \draw[red] (P) arc[start angle=-147.53, end angle=-85.60, radius=\radius];
    
    \draw[fill=red, draw=black] (P) circle (1pt);
    \draw[fill=red, draw=black] (Q) circle (1pt);
    \draw[fill=blue, draw=black] (C) circle (1pt);
    \draw[fill=purple, draw=black] (Pinv) circle (1pt);
    \draw[fill=purple, draw=black] (Qinv) circle (1pt);
    \draw[fill=brown, draw=black] (Pinter) circle (0.5pt);
    \draw[fill=brown, draw=black] (Qinter) circle (0.5pt);
    \draw[dash pattern=on 0.45pt off 1pt] (C)--(Pinv);
    \draw[dash pattern=on 0.45pt off 1pt] (C)--(Pinter);
    \draw[dash pattern=on 0.45pt off 1pt] (C)--(Qinv);
    \draw[dash pattern=on 0.45pt off 1pt] (C)--(Qinter);

    \node[above] at (-0.2, 1.3) {$P^{-1}$};
    \node[above] at (1.2, 1.) {$Q^{-1}$};
    \node[above] at (-0.2, .6) {$P$};
    \node[above] at (.6, .2) {$Q$};
    
\end{tikzpicture}
\centering
\label{fig:poincare_geodesics1}
\caption{Geodesic in the Poincaré disk connecting points \( P \) and \( Q \) depicted in red, with their corresponding inverses \( P^{-1} \) and \( Q^{-1} \). A circle orthogonal to the unit circle that passes through points \( P \) and \( Q \) must also intersect inverses \( P^{-1} \) and \( Q^{-1} \), which are the reflections of \( P \) and \( Q \) with respect to the unit circle.}
\end{figure}

\subsubsection{Poincaré Disk Model}
The \textit{Poincaré disk model} represents hyperbolic geometry within the unit disk in Euclidean space. This model is defined on the open unit disk, denoted as 
\begin{equation}
    \mathcal{P} \coloneqq \left\{ \bz \in \mathbb{R}^2 : \|\bz\| < 1 \right\} 
    \,.
\end{equation}
Let \( \|\bz\| \) denote the Euclidean norm of the point \( \bz \). The hyperbolic distance between two points \( \bz_1 \) and \( \bz_2 \) inside the unit disk is given by:
\begin{equation}
    d(\bz_1, \bz_2) = \text{arcosh}\left(1 + \frac{2 \|\bz_1 - \bz_2\|^2}{(1 - \|\bz_1\|^2)(1 - \|\bz_2\|^2)}\right)\,.
    \label{poincare_distances_1}
\end{equation}
\paragraph{Metric Tensor.} The \textit{metric tensor} in the Poincaré disk is defined using a conformal modification of the Euclidean metric:
\begin{equation}
    ds^2 = \frac{4(dx^2 + dy^2)}{(1-x^2-y^2)^2}\,,
\end{equation}
where $dx^2 + dy^2$ is the standard Euclidean metric in $\mathbb{R}^2$ and $4/(1-x^2-y^2)^2$ is a conformal scaling factor dependent upon distance from the origin. As points approach the boundary of the disk ($x^2 + y^2 \rightarrow 1$), distances tend to infinity and the metric becomes increasingly large. 

\paragraph{Geodesics.} In the Poincaré disk model, \textit{geodesics} which represent the ``shortest paths" in hyperbolic space are defined by arcs of circles that are orthogonal to the boundary of the disk or by diameters of the disk. Given two points \( \bz_1 \) and \( \bz_2 \) inside the disk, their geodesic can be constructed as the unique circle passing through both points and perpendicular to the disk boundary. If the points lie on a diameter, the geodesic is simply the Euclidean straight line segment connecting them. Further details are provided in \ref{geodesics}.

\subsubsection{Parallel Transport and Exponential Map}
Hyperbolic space is not a vector space, which renders the linear operations typically employed in vector spaces inapplicable. Riemannian geometry requires special tools to understand how vectors change as they move across curved surfaces. 
\textit{Parallel transport} adjusts vectors along a curve while preserving their inner product structure, ensuring consistent comparisons between different points. The \textit{exponential map} projects directions from the tangent space to the manifold along geodesics, allowing us to translate local, linear information into motion across the curved space.

\paragraph{Parallel transport.} Given two vectors $\bx, \by \in \mathcal{P}$, the parallel transport from $\bx$ to $\by$ is defined as a map $\text{PT}_{\bx \to \by}$ from the tangent plane at $\bx$, $T_{\bx}\mathcal{P}$, to the tangent plane at $\by$, $T_{\by}\mathcal{P}$. This map transports a vector in $T_{\bx}\mathcal{P}$ along the geodesic from $\bx$ to $\by$ in a parallel manner, preserving the metric tensor throughout the process. When $\bx$ is the origin $\mu_0$, parallel transport can be expressed in simple closed form: 
\begin{equation} 
\text{PT}_{\mu_0 \rightarrow \by}(\bv) \coloneqq (1 - |\by|_2^2) \bv\,, 
\end{equation} 
where $\bv$ denotes the vector being transported from the tangent space at the origin $\mu_0$ to the tangent space at point $\by$ in the Poincaré disk. Further details are provided in \cite{NEURIPS2018_dbab2adc} and Appendix \ref{parallel_transport_and_expmap}.

\paragraph{Möbius Addition.}
Möbius addition defines an operation for combining points taking into account the negative curvature of the manifold. This can be used to define a simple closed form for the exponential map.  
Within the Poincaré disk, Möbius addition reduces to:
\begin{equation}
    \bx \oplus \by \coloneqq \frac{(1 + 2 \langle \bx, \by \rangle + ||\by||_2^2)\bx + (1 - ||\bx||_2^2)\by}{1 + 2\langle \bx, \by \rangle  + ||\bx||^2_2||\by||^2_2 }\,.
\end{equation}
For additional details see \cite{ungar2009gyrovector}.

\paragraph{Exponential and Logarithmic Map.} 
The exponential map converts directions from the tangent plane into geodesic movement on the curved surface. The exponential map and its inverse, the logarithmic map, are defined on the Poincaré disk as:
\begin{align}
\exp_{\bx}^{-1}(\bv) &\coloneqq \bx \oplus \left( \tanh \left( (1 - ||\bx||_2^2)||\bv||_2 \right) \frac{\bv}{||\bv||_2} \right)\,, \\
    \log_{\bx}(\by) &\coloneqq \tanh^{-1} \left(\|\by\|_2\right)\frac{\by}{\|\by\|_2}\,.
\end{align}
See Appendix \ref{parallel_transport_and_expmap} for additional details.

\subsection{Phylogenetics}
We aim to infer a latent bifurcating tree that represents the evolutionary relationships among a set of observed molecular sequences. A phylogeny is characterized by both a tree topology, denoted by $\tau$, and a corresponding set of branch lengths $\mathcal{B}$. 
Each edge $e \in E$ has an associated \textit{branch length}, represented as $\beta(e) \in \mathbb{R}_{>0}$, with $\mathcal{B} = \{\beta(e)\}_{e \in E}$. The branch length reflects the rate of evolutionary change between two 
vertices. An \textit{ultrametric tree} assumes constant rate of evolution along all paths from root to leaf nodes. \textit{Nonclock trees} do not assume constant evolutionary rate.

\begin{figure*}[h!]
   \centering
   \begin{tikzpicture}[sloped, scale=0.8]
   \tikzset{
   dep/.style={circle,minimum size=1.5,fill=orange!20,draw=orange,line width=1.2pt,
               general shadow={fill=gray!60,shadow xshift=1pt,shadow yshift=-1pt}},
   dep/.default=1cm,
   cli/.style={circle,minimum size=1.5,fill=white,draw,line width=1.2pt,
               general shadow={fill=gray!60,shadow xshift=1pt,shadow yshift=-1pt}},
   cli/.default=1cm,
   obs/.style={circle,minimum size=1.5,fill=gray!20,draw,line width=1.2pt,
               general shadow={fill=gray!60,shadow xshift=1pt,shadow yshift=-1pt}},
   obs/.default=1cm}

       \pgfmathsetmacro{\rowAoffsetY}{4.25};
       \pgfmathsetmacro{\rowBoffsetY}{1.75};
       \pgfmathsetmacro{\rowCoffsetY}{-.75};
       
       \pgfmathsetmacro{\colAoffsetX}{-6.5};
       \pgfmathsetmacro{\colBoffsetX}{-1.5};
       \pgfmathsetmacro{\colCoffsetX}{3.25};
       \pgfmathsetmacro{\colDoffsetX}{8.25};
       
       \node(Resample) at (-4.5,6) {\textsc{Resample}};
       \node (A) at (-.95+\colAoffsetX, 0+\rowAoffsetY) {\tiny $\bullet$};
       \node (B) at (-.65+\colAoffsetX, -0.65+\rowAoffsetY) {\tiny $\bullet$};
       \draw
       (\colAoffsetX,\rowAoffsetY) circle [radius=1];
       \draw
       (A) arc[start angle=82.99, end angle=-33.44, radius=0.421];
       \node (C) at (.7+\colAoffsetX, .6+\rowAoffsetY) {\tiny $\bullet$};
       \node (D) at (-.6+\colAoffsetX, .75+\rowAoffsetY) {\tiny $\bullet$};
       
       \coordinate (aone) at (\colAoffsetX+.5,\rowAoffsetY); 
       
       \node (E) at (-.95+\colAoffsetX, 0+\rowBoffsetY) {\tiny $\bullet$};
       \node (F) at (-.65+\colAoffsetX, -0.65+\rowBoffsetY) {\tiny $\bullet$};
       \node (G) at (.7+\colAoffsetX, .6+\rowBoffsetY) {\tiny $\bullet$};
       \node (H) at (-.6+\colAoffsetX, .75+\rowBoffsetY) {\tiny $\bullet$};
       \draw[line width=1.2pt] (\colAoffsetX,\rowBoffsetY) circle [radius=1];
       \draw[line width=1.2pt] (G) arc[start angle=-54.2, end angle=-138.95, radius=0.97];
       
       \coordinate (bone) at (\colAoffsetX + 1.25, \rowBoffsetY);
       
       \node (I) at (-.95+\colAoffsetX, 0+\rowCoffsetY) {\tiny $\bullet$};
       \node (J) at (-.65+\colAoffsetX, -.65+\rowCoffsetY) {\tiny $\bullet$};
       \node (K) at (.7+\colAoffsetX, 0.6+\rowCoffsetY) {\tiny $\bullet$};
       \node (L) at (-.6+\colAoffsetX, .75+\rowCoffsetY) {\tiny $\bullet$};
       \draw[line width=0.9pt]
       (\colAoffsetX,\rowCoffsetY) circle [radius=1];
       \draw[line width=0.9pt] 
       (L) arc[start angle=33.87, end angle=-83.90, radius=0.483];
       
       \coordinate (cone) at (\colAoffsetX+1.25, \rowCoffsetY);
   
       \node(Propose) at (.75,6) {\textsc{Propose}};
       
       \node (AA) at (-.95+\colBoffsetX, 0+\rowAoffsetY) {\tiny $\bullet$};
       \node (BB) at (-.65+\colBoffsetX, -.65+\rowAoffsetY) {\tiny $\bullet$};
       \node (CC) at (.7+\colBoffsetX, .6+\rowAoffsetY) {\tiny $\bullet$};
       \node (DD) at (-.6+\colBoffsetX, .75+\rowAoffsetY) {\tiny $\bullet$};
       \draw (\colBoffsetX,\rowAoffsetY) circle [radius=1];
       \draw (CC) arc[start angle=-54.2, end angle=-138.95, radius=0.97];
       
       \coordinate (atwo) at (-3, \rowAoffsetY);
       \coordinate (athree) at (-.25, \rowAoffsetY);

       \node (EE) at (-.95+\colBoffsetX, 0+\rowBoffsetY) {\tiny $\bullet$};
       \node (FF) at (-.65+\colBoffsetX, -.65+\rowBoffsetY) {\tiny $\bullet$};
       \node (GG) at (.7+\colBoffsetX, .6+\rowBoffsetY) {\tiny $\bullet$};
       \node (HH) at (-.6+\colBoffsetX, .75+\rowBoffsetY) {\tiny $\bullet$};
       \draw (\colBoffsetX,\rowBoffsetY) circle [radius=1];
       \draw (GG) arc[start angle=-54.2, end angle=-138.95, radius=0.97];
       
       \coordinate (btwo) at (-3, \rowBoffsetY);
       \coordinate (bthree) at (-.25, \rowBoffsetY);
       \draw[->,dashed] (bone) -- (btwo);
       \draw[->,dashed] (bone) -- (atwo);
       
       \node (II) at (-.95+\colBoffsetX, 0+\rowCoffsetY) {\tiny $\bullet$};
       \node (JJ) at (-.65+\colBoffsetX, -.65+\rowCoffsetY) {\tiny $\bullet$};
       \node (KK) at (.7+\colBoffsetX, .6+\rowCoffsetY) {\tiny $\bullet$};
       \node (LL) at (-.6+\colBoffsetX, .75+\rowCoffsetY) {\tiny $\bullet$};
       \node (kkll) at (\colBoffsetX, \rowCoffsetY) {};
       \draw (\colBoffsetX,-.75) circle [radius=1];
       \draw (LL) arc[start angle=33.87, end angle=-83.90, radius=0.483];
       
       \coordinate (ctwo) at (-3., \rowCoffsetY);
       \coordinate (cthree) at (-.25, \rowCoffsetY);
       \draw[->,dashed] (cone) -- (ctwo);
       
       \node (U) at (-.95+\colCoffsetX, 0+\rowAoffsetY) {\tiny $\bullet$};
       \node (V) at (-.65+\colCoffsetX, -.65+\rowAoffsetY) {\tiny $\bullet$};
       \node (W) at (.7+\colCoffsetX, .6+\rowAoffsetY) {\tiny $\bullet$};
       \node (X) at (-.6+\colCoffsetX, .75+\rowAoffsetY) {\tiny $\bullet$};
       \node (Vmid) at (.075+\colCoffsetX, .42+\rowAoffsetY) {};
       \coordinate (afour) at (2,4.25);
       \draw[->, dashed] (athree) -- (afour);
       \draw (3.25,4.25) circle [radius=1];
       \draw (W) arc[start angle=-54.2, end angle=-138.95, radius=0.97];
       \draw (Vmid) arc[start angle=-42.5, end angle=-88.8, radius=1.33];
       
       \node (M) at (-.95+\colCoffsetX, 0+\rowBoffsetY) {\tiny $\bullet$};
       \node (N) at (-.65+\colCoffsetX, -.65+\rowBoffsetY) {\tiny $\bullet$};
       \node (O) at (.7+\colCoffsetX, .6+\rowBoffsetY) {\tiny $\bullet$};
       \node (P) at (-.6+\colCoffsetX, .75+\rowBoffsetY) {\tiny $\bullet$};
       \coordinate (bfour) at (2,1.75);
       \draw[->, dashed] (bthree) -- (bfour);
       \draw (3.25,1.75) circle [radius=1];
       \draw (O) arc[start angle=-54.2, end angle=-138.95, radius=0.97];
       \draw (M) arc[start angle=82.99, end angle=-33.44, radius=0.421];
       
       \node (Q) at (-.95+\colCoffsetX, 0+\rowCoffsetY) {\tiny $\bullet$};
       \node (R) at (-.65+\colCoffsetX, -.65+\rowCoffsetY) {\tiny $\bullet$};
       \node (S) at (.7+\colCoffsetX, .6+\rowCoffsetY) {\tiny $\bullet$};
       \node (T) at (-.6+\colCoffsetX, .75+\rowCoffsetY) {\tiny $\bullet$};
       \draw (T) arc[start angle=33.87, end angle=-83.90, radius=0.483];
       \draw (3.25,-.75) circle [radius=1];
       \node (midTQ) at (-.575+\colCoffsetX, .25+\rowCoffsetY) {};
       \draw (midTQ) arc[start angle=29.44, end angle=-38.97, radius=0.803];
       
       \coordinate (cfour) at (2,-.75);
       \draw[->, dashed] (cthree) -- (cfour);
       
       \node(Weighting) at (5.75,6) {\textsc{Weighting}};
       \node (Uu) at (-.95+\colDoffsetX, 0+\rowAoffsetY) {\tiny $\bullet$};
       \node (Vv) at (-.65+\colDoffsetX, -.65+\rowAoffsetY) {\tiny $\bullet$};
       \node (Ww) at (.7+\colDoffsetX, .6+\rowAoffsetY) {\tiny $\bullet$};
       \node (Xx) at (-.6+\colDoffsetX, .75+\rowAoffsetY) {\tiny $\bullet$};
       \node (Vvmid) at (.075+\colDoffsetX, .42+\rowAoffsetY) {};
       \draw[line width=0.9pt] (Ww) arc[start angle=-54.2, end angle=-138.95, radius=0.97];
       \draw[line width=0.9pt] (Vvmid) arc[start angle=-41.5, end angle=-88.8, radius=1.337];
       \draw[line width=0.9pt] (8.25,4.25) circle [radius=1];
       
       \coordinate (afive) at (4.5,4.25);
       \coordinate (asix) at (7,4.25);
       \draw[->, dashed] (afive) -- (asix);
       
       \node (Mm) at (-.95+\colDoffsetX, 0+\rowBoffsetY) {\tiny $\bullet$};
       \node (Nn) at (-.65+\colDoffsetX, -.65+\rowBoffsetY) {\tiny $\bullet$};
       \node (Oo) at (.7+\colDoffsetX, .6+\rowBoffsetY) {\tiny $\bullet$};
       \node (Pp) at (-.6+\colDoffsetX, .75+\rowBoffsetY) {\tiny $\bullet$};
       \draw (Oo) arc[start angle=-54.2, end angle=-138.95, radius=0.97];
       \draw (Mm) arc[start angle=82.99, end angle=-33.44, radius=0.421];
       \draw (8.25,1.75) circle [radius=1];
       
       \coordinate (bfive) at (4.5,1.75);
       \coordinate (bsix) at (7,1.75);
       \draw[->, dashed] (bfive) -- (bsix);
       
       \node (Qq) at (-.95+\colDoffsetX, 0+\rowCoffsetY) {\tiny $\bullet$};
       \node (Rr) at (-.65+\colDoffsetX, -.65+\rowCoffsetY) {\tiny $\bullet$};
       \node (Ss) at (.7+\colDoffsetX, .6+\rowCoffsetY) {\tiny $\bullet$};
       \node (Tt) at (-.6+\colDoffsetX, .75+\rowCoffsetY) {\tiny $\bullet$};
       \draw[line width=1.2pt] 
       (Tt) arc[start angle=33.87, end angle=-83.90, radius=0.483];
       \node (midQqTt) at (-.575+\colDoffsetX, .25+\rowCoffsetY) {};
       \draw[line width=1.2pt] 
       (midQqTt) arc[start angle=29.44, end angle=-38.97, radius=0.803];
       \draw[line width=1.2pt] 
       (8.25,-.75) circle [radius=1];
       
       \coordinate (cfive) at (4.5,-.75);
       \coordinate (csix) at (7,-.75);
       \draw[->, dashed] (cfive) -- (csix);
       
   \end{tikzpicture}
    \caption{Overview of \textsc{Csmc} in the Poincaré disk. Each iteration involves three steps: (1) \textsc{Resample} partial states by importance weights, (2) \textsc{Propose} extensions by connecting two trees, and (3) \textsc{Weight} using Felsenstein's pruning. The progression of $K = 3$ samples are shown with taxa embedded near the boundary of the disk. Linewidths represent weights.}

    \label{fig:trellis}
\end{figure*}
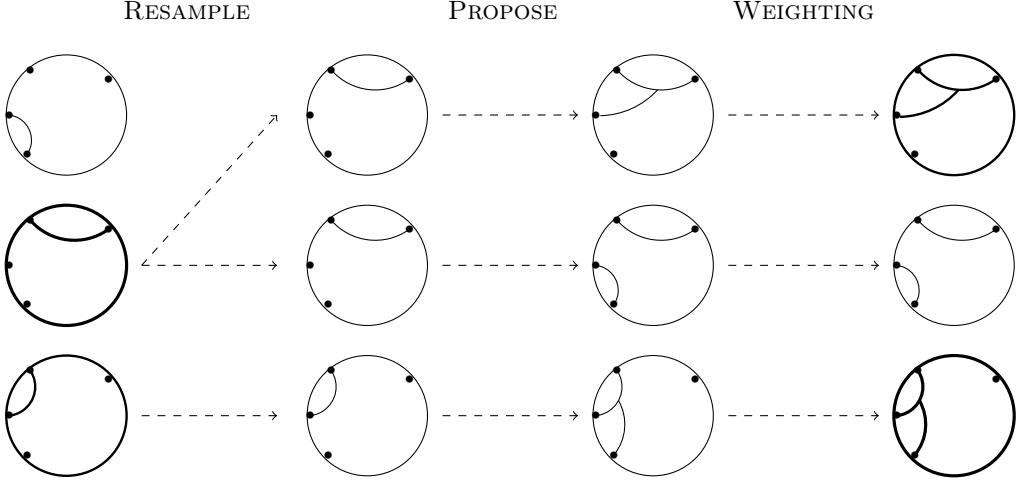

\paragraph{Bayesian Phylogenetic Inference.}
Let $\mathbf{Y} = \{Y_1,\cdots,Y_S\} \in \Omega^{N \times S}$ represent the observed molecular sequences, where $\Omega$ denotes the set of characters, $S$ is the sequence length, and $N$ is the number of species. Bayesian phylogenetic inference involves specifying a prior distribution and a likelihood function over the tree topology $\tau$, the set of branch lengths $\mathcal{B}$, and the model parameters $\theta$ to express the joint posterior:
\begin{equation}
    P_{\theta}(\mathcal{B},\tau|\mathbf{Y}) = \frac{P_{\theta}(\mathbf{Y}|\tau,\mathcal{B}) P_{\theta}(\tau,\mathcal{B})}{P_{\theta}(\mathbf{Y})}.
    \label{eq:posterior}
\end{equation}
Here, the prior is uniform over the possible topologies, while the branch lengths follow independent exponential distributions with rate $\lambda_{bl}$. Each site in the sequence evolves independently according to a continuous-time Markov chain characterized by a rate matrix $\mathbf{Q}$.
Let $\zeta_{v,s}$ represent the genomic state at site $s$ for species $v$, and define the evolutionary model for branch $b(v \to v')$ as follows:
\begin{equation}
P_{\theta}(\zeta_{v',s} = j|\zeta_{v,s} = i) = \exp\left(\beta(e)\mathbf{Q}_{i,j}\right).
\label{genomic_state}
\end{equation}
The likelihood for a given phylogeny is then expressed as $P_{\theta}(\mathbf{Y}|\tau,\mathcal{B}) = \prod_{i=1}^{S} P_{\theta}(Y_i|\tau,\mathcal{B})$, and can be computed efficiently in linear time using the sum-product algorithm or Felsenstein’s pruning algorithm~\citep{Felsenstein:1981:J-Mol-Evol:7288891}:
\begin{align}
    P_{\theta}(&\mathbf{Y}|\tau,\mathcal{B}) 
    \coloneqq \prod_{i=1}^{S} \sum_{a^i} \eta(a^i_{\rho}) \prod_{(u,v) \in E(\tau)} \exp\left(-\beta_{u,v} \mathbf{Q}_{a_u^i,a_v^i} \right),
\end{align}
where $\rho$ is the root node, $a^i_u$ is the character assigned to node $u$, $E(\tau)$ denotes the set of edges in the tree $\tau$, and $\eta$ is the stationary distribution of the Markov chain. The normalization constant $P_{\theta}(\mathbf{Y})$ involves marginalizing over $(2N-3)!!$  topologies, which is computationally infeasible~\citep{semple2003phylogenetics}.

\subsection{Variational Inference}
Variational Inference (\textsc{VI}) is used to approximate the posterior distribution $P_\theta(\mathcal{B}, \tau | \mathbf{Y})$ when directly marginalizing latent variables is not feasible. By introducing a simpler, tractable distribution $Q_\phi(\mathcal{B}, \tau | \mathbf{Y})$, we can derive a lower bound for the log-likelihood:
\begin{equation}
\log P_\theta(\mathbf{Y}) \geq \mathcal{L}_{\text{ELBO}}(\theta, \phi, \mathbf{Y}) \coloneqq \underset{Q}{\mathbb{E}}\left[\log \frac{P_\theta(\mathbf{Y}, \mathcal{B}, \tau)} {Q_\phi(\mathcal{B}, \tau | \mathbf{Y})}\right]\,. \label{ELBO}
\end{equation}
Auto-Encoding Variational Bayes (\textsc{AEVB})~\citep{kingma2013autoencoding} trains both the distributions $Q_\phi(\mathcal{B}, \tau | \mathbf{Y})$ and $P_\theta(\mathbf{Y}, \mathcal{B}, \tau)$ simultaneously. The expectation in Eq. \ref{ELBO} is estimated by averaging Monte Carlo samples from $Q_\phi(\mathcal{B}, \tau | \mathbf{Y})$, with the samples reparameterized using a deterministic function of a random variable independent of $\phi$. 

Constructing a tractable approximation $Q_\phi(\mathcal{B}, \tau | \mathbf{Y})$ can be difficult, prompting the use of \textsc{Csmc}.

\subsection{Combinatorial Sequential Monte Carlo}

\textsc{Csmc} approximates a sequence of probability spaces that converge to the target distribution over \(N-1\) steps, indexed by $r$. It uses sequential importance resampling to estimate the unnormalized target distribution $\pi$ and its normalization constant $\|\pi\|$, yielding the desired posterior. This is achieved by sampling $K$ \textit{partial states} (see \ref{partial_states}) $\{s_{r}^k\}_{k=1}^{K} \in \mathcal{S}_r$ at each step $r$, gradually constructing an approximation to the full distribution:
\begin{equation}
    \widehat{\pi}_{r} = \|\widehat{\pi}_{r-1}\|\frac{1}{K}\sum\limits_{k=1}^{K}w_{r}^k\delta_{s_r^k}(s) \qquad \forall s \in \mathcal{S}\,.
\end{equation}
Unlike typical \textsc{Smc} methods, \textsc{Csmc} works with both a combinatorial set of tree topologies and the continuous branch lengths. At each step, Monte Carlo samples are resampled, keeping them concentrated in regions of higher probability with importance weights given by:
\begin{equation}
 w_{r}^k = w(s_{r-1}^{a_{r-1}^k},s_{r}^k) = \frac{\pi(s_{r}^k)}{\pi(s_{r-1}^{a_{r-1}^k})}\cdot \frac{\nu^{-}(s_{r-1}^{a_{r-1}^k})}{q(s_{r}^k |s_{r-1}^{a_{r-1}^k})}\,,
 \label{eq:weights}
\end{equation}
where $q(s_{r}^k |s_{r-1}^{a_{r-1}^k})$ is a proposal distribution, and $a_{r-1}^k \in \{1, \cdots, K\}$ denotes the resampled ancestor indices chosen with probabilities proportional to the weights: $\mathbb{P}(a_{r-1}^k = i) = \nicefrac{w_{r-1}^i}{\sum_{l=1}^K w_{r-1}^l}$. The factor $\nu^{-}$ corrects for overcounting. After resampling, new states are proposed and extended (see Fig.~\ref{fig:trellis}). This gives an unbiased estimate of $\|\pi\|$ converging in $L_2$ norm:
\begin{equation}
    \widehat{\mathcal{Z}}_{CSMC} \coloneqq \|\widehat{\pi}_{R}\| = \prod\limits_{r=1}^{R}\left(\frac{1}{K} \sum\limits_{k=1}^{K}w_{r}^k\right) \rightarrow \|\pi \|.
    \label{eq:smcmarginallikelihood}
\end{equation}
\subsection{Variational Combinatorial Sequential Monte Carlo}
\textsc{Vcsmc}~\citep{pmlr-v161-moretti21a} learns model and proposal parameters by maximizing a lower bound on $\|\pi\|$, using \textsc{Csmc} (or \textsc{Ncsmc}) as an unbiased estimator of $\|\pi\|$:
\begin{align}
     \mathcal{L}_{VCSMC} 
     \coloneqq \underset{Q}{\mathbb{E}}\left[\log \widehat{ \mathcal{Z}}_{CSMC} \right]\,.
     \label{elbo_vcsmc}
\end{align}
In \textsc{Csmc}, nodes are sampled to coalesce uniformly, even when many resulting topologies have low probabilities. \textsc{Ncsmc}~\citep{pmlr-v161-moretti21a} uses information from one future iteration to guide the exploration of partial states, providing an exact approximation of the \textit{locally optimal proposal}~\citep{doucet2000sequential,naesseth2019elements}. See \ref{hyperbolicNCSMC} for details of \textsc{Ncsmc} and \textsc{V-Ncsmc}.

\section{Methodology}
In \ref{embedding_points}, we outline the methodology for embedding sequences into hyperbolic space. In \ref{hyp_csmc} we introduce the algorithm that samples embedded points to construct a tree, while \ref{approx_post} defines the approximate posterior. In \ref{scoring_tree_likelihoods}, we describe the process of scoring a tree and assigning importance weights. Finally, \ref{secton_hncsmc} extends the process to \textsc{Ncsmc}.

\subsection{Embedding Sequences}
\label{embedding_points}
We embed sequences into the Poincaré disk by mapping sequences \( Y_n \) into hyperbolic space while preserving pairwise relationships \citep{NIPS2017_59dfa2df}. This involves minimizing a loss function to align hyperbolic distances in the Poincaré disk with original Hamming distances:
\begin{equation}
\mathcal{L} = \sum_{i,j} \left( d_{\mathbb{H}}(\bz_i, \bz_j) - d_H(\bx_i, \bx_j) \right)^2,
\end{equation}
where \( d_{\mathbb{H}}(\bz_i, \bz_j) \) is the hyperbolic distance between embedded points \( \bz_i \) and \(\bz_j \) in the Poincaré disk, and \( d_H(\bx_i, \bx_j) \) is the Hamming distance between original points \( \bx_i \) and \( \bx_j \). Alternatively, a multilayer perceptron (MLP) can be employed to map sequences directly into the Poincaré disk~\citep{NEURIPS2017_59dfa2df}.

\subsection{Hyperbolic CSMC}
\label{hyp_csmc}
After embedding the leaf nodes in the Poincaré disk, two nodes are sampled for coalescence. To assign the parent node's embedding, we parameterize the geodesic arc \(\gamma(\mathbf{v}_{L,r}, \mathbf{v}_{R,r})\) between the two sampled child nodes \(\mathbf{v}_{L,r}\) and \(\mathbf{v}_{R,r}\), and assign \(\mathbf{v}_p\) to the point on this curve that is closest to the origin $\mu_0$:
\begin{equation}
\bv_p \gets \argmin_{\mathbf{v} \in \gamma(\mathbf{v}_{L,r}, \mathbf{v}_{R,r})} d(\mu_0, \bv)\,.
\end{equation}
The minimum geodesic distance is obtained using non-ambient distance coordinates. Conveniently, the distance to the origin is monotonic in the Euclidean norm; therefore, with a parameterization of the geodesic, we can simply minimize the Euclidean distance $d(\mu_0, \bv_p)$.

Once the nodes to coalesce have been sampled, the lengths of the left and right branches, denoted by \(\beta_{L,r}\) and \(\beta_{R,r}\), can be defined as the geodesic distances to the embedding of \(\mathbf{v}_p\). 
However, this results in identical branch lengths for trees with equivalent topologies. Phylogenetic models typically account for variation in branch lengths across equivalent topologies. To account for this, we use the embedded parent node \(\mathbf{v}_p\) to define a proposal distribution on the Poincaré disk.

\begin{algorithm}[t]
\caption{Sampling Trees in Hyperbolic Space}
\begin{algorithmic}[1]
    \State \textbf{Input:} Coalescing points $\mathbf{v}_{L,r}$, $\mathbf{v}_{R,r}$ on $\mathcal{P}$
    
    \State Find the point on the geodesic arc between $\mathbf{v}_{L,r}$ and  $\mathbf{v}_{R,r}$ that is closest to the origin:
    \[
    \bv_p \gets \argmin_{\bv \in \gamma(\bv_{L,r}, \bv_{R,r})} 
    d(\mu_0, \bv)
    \]
    \State Sample a normal distribution in $\mathbb{R}^2$ to yield a vector in the tangent plane at the origin, $T_{\mu_0}\mathcal{P}$:
    \[
    \mathbf{v}^k \sim \mathcal{N}(0, \Sigma)
    \]    
    \State Parallel transport the tangent vector from the origin, $\mathbf{\mu}_0$, to the point $\bv_p$:
    \[
    \tilde{\bv}^k \gets PT_{\mathbf{\mu}_0 \rightarrow \bv_p}(\mathbf{v}^k) = (1 - \|\bv_p\|_2^2)\,\mathbf{v}^k
    \]
    
    \State Apply the exponential map to the transported vector:
    \begin{align}
            \dot{\bv}^k &\gets \exp_{\bv_p}(\tilde{\bv}^k) \nonumber \\&= \bv_p \oplus \left( \tanh \left( (1 - \|\bv_p\|_2^2)\, \|\tilde{\bv}^k\|_2 \right) \frac{\tilde{\bv}^k}{\|\tilde{\bv}^k\|_2} \right) \nonumber
    \end{align}

    \State Compute the left and right branch lengths using geodesic distance (Eq. \ref{poincare_distances_1}):
    \[
    \beta_{L,r}^k \gets d(\mathbf{v}_{L,r}, \dot{\mathbf{v}}^k)\,,\,
    \beta_{R,r}^k \gets d(\mathbf{v}_{R,r}, \dot{\mathbf{v}}^k)
    \]
    
    \State \textbf{Output:} Sampled parent node embedding $\dot \bv^k$ and branch lengths $\beta_{L,r}^k$ , $\beta_{R,r}^k$

\end{algorithmic}
\label{sample_parent_embeddings}
\end{algorithm}

\subsubsection{Sampling Branch Lengths}
\label{proposalDist}
The Lebesgue measure is uniform in $\mathbb{R}^2$; however, the hyperbolic measure is non-uniform in the Poincaré disk. As \(x^2 + y^2 \rightarrow 1\), the metric \(ds^2 \rightarrow \infty\). Moving toward the disk boundary causes the measure of a grid square to increase exponentially, which has significant implications for sampling. 

We define a proposal distribution over parent node embeddings and branch lengths
by adapting the \textit{Wrapped Normal} ($\mathcal{WN}$) distribution introduced by \cite{pmlr-v97-nagano19a}. Algorithm \ref{sample_parent_embeddings} provides the pseudocode for sampling parent node embeddings in hyperbolic space. Given two coalescing embedded points $\bv_{L,r}^k$ and $\bv_{R,r}^k$ where $(L,R)$ denotes the left and right child nodes and $r$ denotes the rank event, the subroutine begins by identifying the point in the geodesic between $\bv_{L,r}^k$ and $\bv_{R,r}^k$ that is closest to the origin (Step 2). To sample the location of a parent node, we sample in the tangent plane at the origin where the geometry is simpler (Step 3), and then use parallel transport to move the samples to the desired mean (Step 4). Next, we apply the exponential map to project the samples back onto the manifold (Step 5). Finally, lengths of the left and right branches $\beta_{L,r}^k,$ $\beta_{R,r}^k$ are obtained using the geodesic distance between the embedding of the sampled parent and the left and right children (Step 6). Pseudocode for the full hyperbolic \textsc{Csmc} (\textsc{H-Csmc}) procedure is provided in Algorithm~\ref{alg:hcsmc} of \ref{hyperboliccsmc}.

\begin{figure}
    \centering
    \includegraphics[width=0.45\linewidth]
    {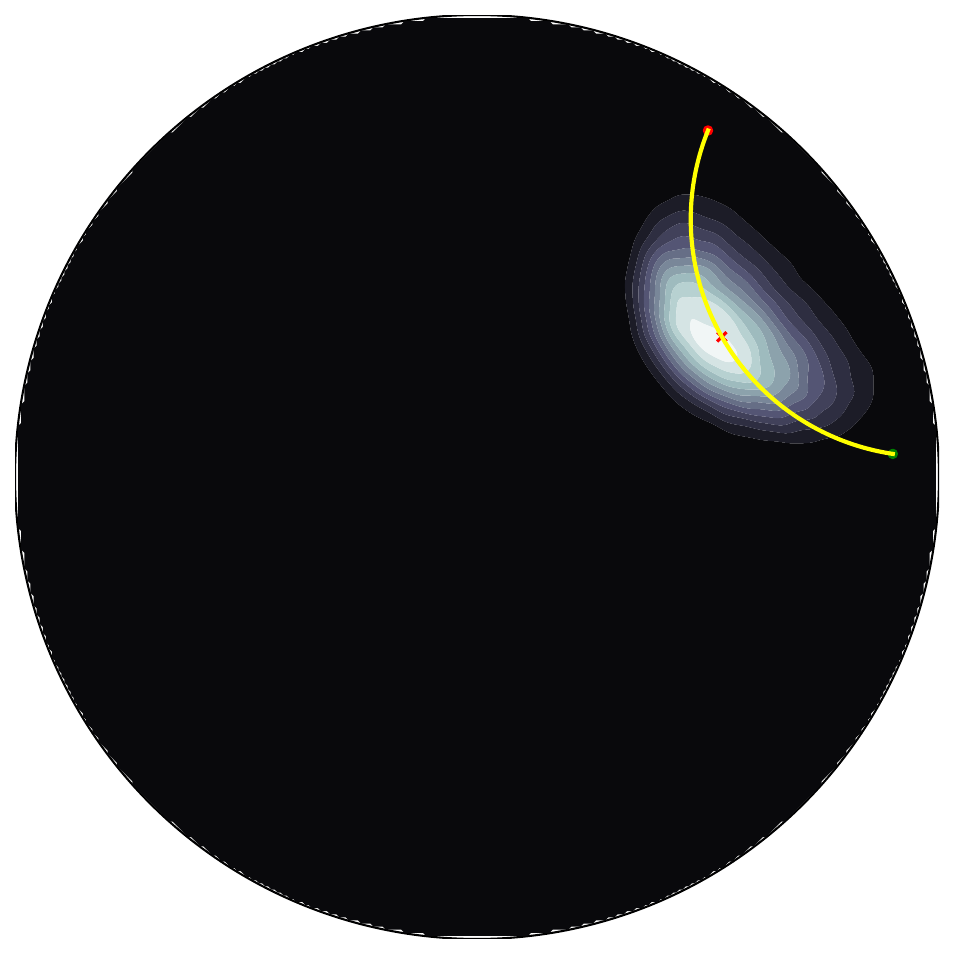}
    \quad
    \includegraphics[width=0.45\linewidth]
{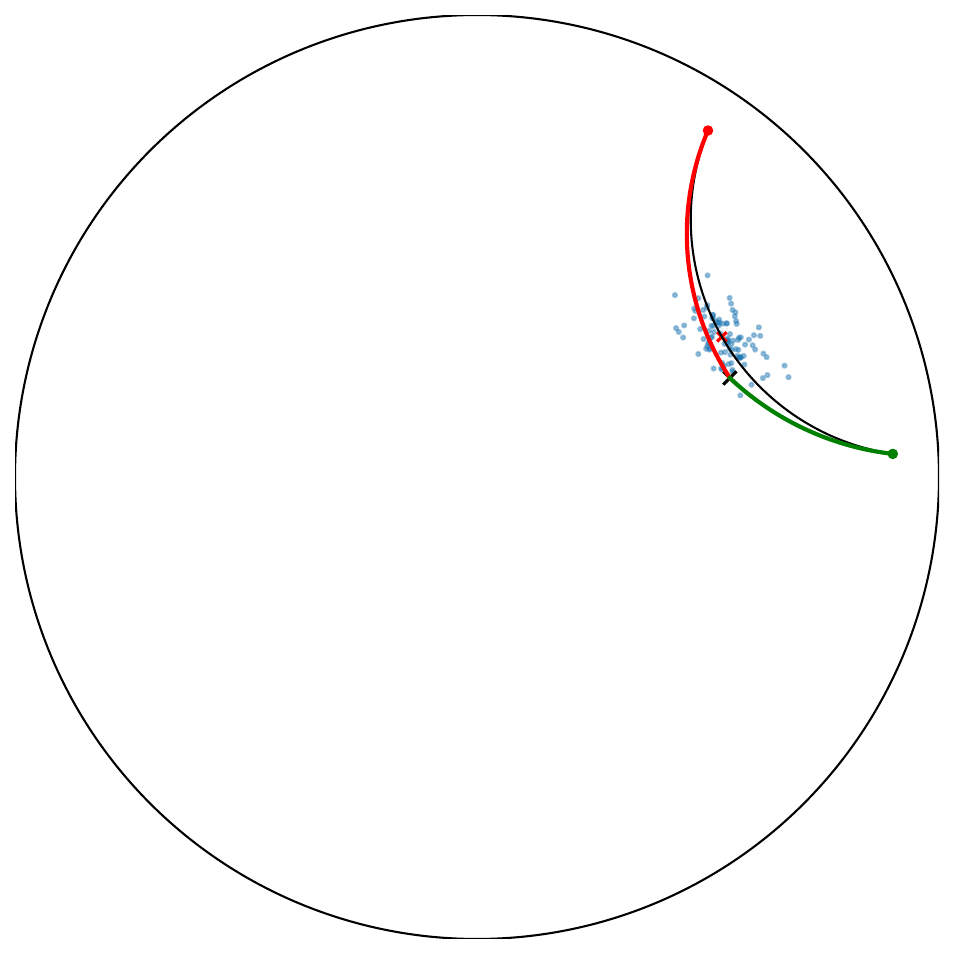}
    
    \caption{%
    \textbf{Left:} Contours of the $\mathcal{WN}$  distribution over parent node embedding for two coalescing leaves. \textbf{Right:} After drawing $\bv^k \sim \mathcal{N}(0,\Sigma)$, the vector is projected via \( \text{proj}_{\bv_p} \coloneqq \exp_{\bv_p} \circ \, \text{PT}_{\mu_0 \rightarrow _{\bv_p}} \). Left and right branch lengths are mapped via geodesic distance. 
}

    \label{fig:wrappednormal}
    \vspace{-1em}
\end{figure}

Fig.~\ref{fig:wrappednormal} (left) shows the contours of a $\mathcal{WN}$ distribution on the Poincaré disk for the parent node $\dot \bv^k$ of two child nodes, $\bv_{L,r}^k$ and $\bv_{R,r}^k$. Note the curved shape due to the concentration of the measure with respect to the geodesic distance. Fig.~\ref{fig:wrappednormal} (right) displays samples from the proposal distribution \(\dot \bv^k \sim \mathcal{WN}(\bv_p,\Sigma)\), which implicitly defines a distribution over the branch lengths $\beta_{L,r}^k$ and $\beta_{R,r}^k$.
Let $\text{proj}_{\bv_p} \coloneqq \exp_{\bv_p} \circ \, \text{PT}_{\mu_0 \rightarrow _{\bv_p}}$. The log density can be expressed as follows:
\begin{equation}
    \log p(\dot \bv^k) = \log p(\bv_p) - \log \text{det} \left( \frac{\partial \,\text{proj}_{\bv_p}}{\partial\, \bv_p}\right)\,.
\end{equation}
The determinant can be computed in $\mathcal{O}(n)$:
\begin{align}
    \text{det} &\left( \frac{\partial \,\text{proj}_{\bv_p}(\bv^k)}{\partial\, \bv_p}\right)  \\ &= 
    \text{det} \left( \frac{\partial \,\exp_{\bv_p}(\tilde{\bv}^k)}{\partial\, \tilde{\bv}^k}\right) \cdot 
    \text{det} \left( \frac{\partial \,\text{PT}_{\mu_0 \rightarrow _{\bv_p}}(\bv^k)}{\partial\, \bv_p}\right)\,.\nonumber
\end{align}
Since each operation is differentiable, we can backpropagate through all necessary transformations.

\subsection{Approximate Posterior}
\label{approx_post}
The proposal returned by \textsc{H-Vcsmc} factorizes as:
\begin{align}
     &Q_{}\left(s_{1:R}^{1:K}, a_{1:R-1}^{1:K}\right)
    \coloneqq   \\ 
    &
    \prod\limits_{k=1}^{K}q_{}(s_{1}^k)\times 
    \prod\limits_{r=2}^{R}\prod\limits_{k=1}^{K} \left[
    \frac{w_{r-1}^{a_{r-1}^k}}{\sum_{l=1}^K w_{r-1}^l} \cdot
    q_{}\left(s_{r}^k|s_{r-1}^{a_{r-1}^k}\right)
     \right]
    \, , \nonumber
    \label{full_proposal}
\end{align}

where the term for each step is the product of \textsc{Uniform} and $\mathcal{WN}$ terms for topologies and embeddings:
\begin{align}
    q_{}(s_r^k|s_{r-1}^{a_{r-1}^k})  &\coloneqq q(\tau_r^k|\tau_{r-1}^{a_{r-1}^k})\cdot q(\dot \bv_r^k|\dot \bv_{r-1}^{a_{r-1}^k},\tau_{r-1}^{a_{r-1}^k})\,.
\end{align}

The probability that a point came from the \( \mathcal{WN} \) distribution is defined by inverting $\text{proj}_{\bv_p}$ using parallel transport and the logarithmic map. 
Conveniently, since branch lengths depend only on $\dot \bv^k$, and $\dot \bv_k = \text{proj}_{\bv_p}$ we can evaluate the probability $N(\bv^k|\mu_0,\Sigma)$ under the appropriate change of coordinates. See ~\ref{branch_lengths} for the details of defining and evaluating \( q(\dot \bv_r^k|\dot \bv_{r-1}^{a_{r-1}^k},\tau_{r-1}^{a_{r-1}^k}) \).

\vspace{-.5em}
\subsection{Scoring Tree Likelihoods}
\label{scoring_tree_likelihoods}
\vspace{-.5em}

Algorithm \ref{sample_parent_embeddings} maps a point in the Poincaré disk to branch lengths, however the target $\pi$ is defined on $(\beta, \tau)$, while the proposal $q$ is defined on $(\dot \bv, \tau)$. To define importance weights, a change of variables is needed between branch lengths and the sampled points:
\begin{equation}
    w_{r}^k \coloneqq \dfrac{\pi(s_{r}^k)}{\pi(s_{r-1}^{a_{r-1}^k})}\cdot \dfrac{\nu^{-}(s_{r-1}^{a_{r-1}^k})}{q(s_{r}^k |s_{r-1}^{a_{r-1}^k})}\,\cdot \,
 \text{det} \left( \dfrac{\partial \beta_{i,r}^k(\dot \bv^k)}{\partial \dot \bv^k}\right)\,,
 \label{eq:hyperbolic_weights}
\end{equation}
where $i \in \{L, R\}$ denotes left or right child nodes and \( \beta_{i,r}^k(\dot \bv^k) = d(\dot \bv^k, \bv_{i,r}^k) \) are the sampled branch lengths, defined as the hyperbolic distance \( d(\cdot,\cdot) \) from Eq.~\ref{poincare_distances_1}.

\paragraph{Theoretical Justification.}
\textsc{H-Csmc} is an \textsc{Smc} algorithm on the extended space of all random variables generated by Algorithm~\ref{alg:hcsmc}. Therefore, it keeps the favorable properties of \textsc{Csmc}, such as unbiasedness of the normalization constant estimate and asymptotic consistency. The key property that ensures this for \textsc{H-Csmc} is \emph{proper weighting} \citep{liu2008,naesseth2015nested,naesseth2016high}. 

\begin{definition}[Proper Weighting]
We say that the random pair $(s_r, w_r)$ are \emph{properly weighted} for the unnormalized distribution $\frac{\pi(s_r) \nu^-(s_{r-1})}{\pi(s_{r-1})}$ if $w_r \geq 0$ almost surely, and for all measurable functions $h$,
\begin{align}
    \mathbb{E}[w_r h(s_r)] &= \int h(s_r) \frac{\pi(s_r) \nu^-(s_{r-1})}{\pi(s_{r-1})} \, \mathrm{d}s_r.
\end{align}
\end{definition}

\begin{proposition}[Proper Weighting]
The particles $s_r^k$ and weights $w_r^k$ 
are properly weighted for $\frac{\pi(s_r) \nu^-(s_{r-1})}{\pi(s_{r-1})}$.
\label{thm:pw}
\end{proposition}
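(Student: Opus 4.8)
The plan is to verify the proper-weighting identity directly from the definition by tracking the change of variables between the proposal coordinates $(\dot{\bv}^k, \tau^k)$ and the target coordinates $(\beta^k, \tau^k)$. First I would fix a measurable test function $h$ and write out $\mathbb{E}[w_r^k h(s_r^k)]$ as an integral against the proposal $q(s_r^k \mid s_{r-1}^{a_{r-1}^k})$, which is the product of the \textsc{Uniform} topology kernel and the $\mathcal{WN}$ density on parent embeddings induced by $\mathrm{proj}_{\bv_p} = \exp_{\bv_p} \circ\, \mathrm{PT}_{\mu_0 \to \bv_p}$ applied to $\bv^k \sim \mathcal{N}(0,\Sigma)$. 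Substituting the weight expression from Eq.~\ref{eq:hyperbolic_weights}, the $q(s_r^k \mid s_{r-1}^{a_{r-1}^k})$ in the denominator of $w_r^k$ cancels the proposal density in the expectation integral, leaving $\int h(s_r^k)\, \frac{\pi(s_r^k)\nu^-(s_{r-1}^{a_{r-1}^k})}{\pi(s_{r-1}^{a_{r-1}^k})} \det\!\left(\frac{\partial \beta_{i,r}^k(\dot\bv^k)}{\partial \dot\bv^k}\right) \mathrm{d}\dot\bv^k$ (summed over the discrete topology choice).

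The key step is then to invoke the change-of-variables formula: since $\pi$ is a density on $(\beta,\tau)$ but the remaining integral is in $\dot\bv^k$, the Jacobian determinant $\det\!\left(\frac{\partial\beta_{i,r}^k(\dot\bv^k)}{\partial\dot\bv^k}\right)$ is exactly the factor converting $\mathrm{d}\dot\bv^k$ into $\mathrm{d}\beta^k$, so that $\pi(s_r^k)\det(\cdot)\,\mathrm{d}\dot\bv^k = \pi(s_r^k)\,\mathrm{d}\beta^k$ once we express $\pi$ consistently in the $\beta$ coordinates. This turns the integral into $\int h(s_r)\,\frac{\pi(s_r)\nu^-(s_{r-1})}{\pi(s_{r-1})}\,\mathrm{d}s_r$, which is precisely the proper-weighting condition. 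I would also note $w_r^k \geq 0$ almost surely, since $\pi$, $\nu^-$, the proposal density, and the Jacobian of the branch-length map are all nonnegative (the latter being the absolute value of a determinant of a diffeomorphism restricted to the relevant coordinates).

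The main obstacle I anticipate is the bookkeeping around dimensions and which coordinates the Jacobian acts on: Algorithm~\ref{sample_parent_embeddings} produces a parent embedding $\dot\bv^k \in \mathbb{R}^2$ and \emph{two} branch lengths $\beta_{L,r}^k, \beta_{R,r}^k$, so the map $\dot\bv^k \mapsto (\beta_{L,r}^k,\beta_{R,r}^k)$ is between spaces of equal dimension and the determinant is well-defined, but one must be careful that the subscript $i \in \{L,R\}$ in Eq.~\ref{eq:hyperbolic_weights} is really shorthand for the joint map rather than a single coordinate. I would make this precise by writing the Jacobian as that of $\dot\bv^k \mapsto (\beta_{L,r}^k(\dot\bv^k), \beta_{R,r}^k(\dot\bv^k))$ and noting it is invertible on the support (the geodesic-distance parameterization is a local diffeomorphism away from degenerate configurations). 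A secondary point is that the full proposal also includes the resampling weights $\frac{w_{r-1}^{a_{r-1}^k}}{\sum_l w_{r-1}^l}$ and the ancestor index; I would handle this by conditioning on $s_{r-1}^{a_{r-1}^k}$ so that proper weighting is stated and proved at the level of a single extension step, which is all the definition requires, and then appeal to the standard \textsc{Smc} argument (cited via \citep{naesseth2015nested,naesseth2019elements}) that proper weighting at each step propagates to unbiasedness and consistency of $\widehat{\mathcal{Z}}_{CSMC}$.
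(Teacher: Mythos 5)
Your proposal follows essentially the same route as the paper's proof: write $\mathbb{E}_q[w_r^k h(s_r^k)]$ as an integral against the proposal, cancel $q$ against the denominator of the weight in Eq.~\ref{eq:hyperbolic_weights}, and use the Jacobian determinant as the change-of-variables factor converting the integral over $\dot{\bv}^k$ into one over the branch lengths, summing over the discrete topology choice. Your added remarks on nonnegativity, the dimension bookkeeping for the joint map $\dot{\bv}^k \mapsto (\beta_{L,r}^k, \beta_{R,r}^k)$, and conditioning on the resampled ancestor are details the paper leaves implicit, and are consistent with its argument (modulo the two-to-one nature of the embedding-to-branch-length map discussed in Appendix~\ref{branch_lengths}).
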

\vspace{-2em}
\begin{proof}
\begin{align}
    &\underset{q}{\mathbb{E}}[w_r^k h(s_r^k)] = \int q(s_{r}^k |s_{r-1}^{a_{r-1}^k})\, w_r^k \, h(s_r^k)\, ds_r^k \nonumber \\
    &= \sum_{\tau} \int h( \dot \bv_r^k,\tau_r^k) \dfrac{\pi( \dot \bv_r^k,\tau_r^k)\nu^{-}(\dot \bv_{r-1}^{a_{r-1}^k},\tau_{r-1}^{a_{r-1}^k})}{\pi(\dot \bv_{r-1}^{a_{r-1}^k},\tau_{r-1}^{a_{r-1}^k})} \nonumber \\
 &\qquad\qquad\qquad\qquad\times \text{det} \left( \dfrac{\partial \beta_{i,r}^k(\dot \bv^k)}{\partial \dot \bv^k}\right) d\dot \bv^k\nonumber \\
  &= \sum_{\tau} \int h(\beta_{i,r}^k,\tau_r^k) \dfrac{\pi(\beta_{i,r}^k,\tau_r^k)\nu^{-}(\beta_{i,r-1}^{a_{r-1}^k},\tau_{r-1}^{a_{r-1}^k})}{\pi(\beta_{i,r-1}^{a_{r-1}^k},\tau_{r-1}^{a_{r-1}^k})} \, \mathrm{d}\beta_{i,r}^k \nonumber \\
 &= \int h(s_r^k) \frac{\pi(s_r^k) \nu^-(s_{r-1}^{a_{r-1}^k})}{\pi(s_{r-1}^{a_{r-1}^k})} \, \mathrm{d}s_r^k \nonumber
\end{align}
\end{proof}

\subsection{Hyperbolic Nested CSMC}
\label{secton_hncsmc}

We extend \textsc{Ncsmc} \citep{pmlr-v161-moretti21a,yang2024variational} to hyperbolic space through marginalizing over one-step lookahead partial states. 
This extension, referred to as \textsc{H-Ncsmc}, is a hyperbolic adaptation of \textsc{Ncsmc}, retaining favorable properties such as unbiasedness and consistency. Pseudocode for \textsc{H-Ncsmc} is presented in Algorithm \ref{hncsmc1} and detailed in \ref{hyperbolicNCSMC}. 

\begin{algorithm}[t!]
   \caption{Nested Combinatorial Sequential Monte Carlo with Poincaré Embeddings}
   \begin{algorithmic}[1]
       \State \textbf{Input:} $\mathbf{Y} \in \Omega^{N \times M}$, $\theta = (\mathbf{Q},\{\lambda_i\}_{i=1}^{|E|})$; embed $Y_n \rightarrow f(Y_n) \in \mathcal{P}$
       \State Initialize: $\forall k$, $s_{0}^k \leftarrow \perp$, $w_{0}^k \leftarrow 1/K$
       \For{$r = 0$ \textbf{to} $R = N-1$}
           \For{$k = 1$ \textbf{to} $K$}
               \State Resample: $\mathbb{P}(a_{r-1}^k = i) = w_{r-1}^i/\sum_{l=1}^{K} w_{r-1}^l$
               \For{$j = 1$ to $L = \binom{N-r}{2}$}
                   \For{$m = 1$ to $M$}
                       \State Find closest point: \[\qquad\qquad \mathbf{v}_p^{k,m}[j] \gets \argmin_{\mathbf{v} \in \gamma(\mathbf{v}_{L,r}, \mathbf{v}_{R,r})} d(\mu_0, \mathbf{v})\]
                       \State Sample tangent: $\mathbf{v}^{k,m}[j] \sim \mathcal{N}(0, \Sigma)$
                       \State Transport: \[\qquad\qquad\qquad \tilde{\mathbf{v}}^{k,m}[j] \gets (1 - \|\mathbf{v}_p^{k,m}[j]\|_2^2)\mathbf{v}^{k,m}[j]\]
                       \State Map to disk: $\dot{\mathbf{v}}^{k,m}[j] \gets \mathbf{v}_p^{k,m}[j] \oplus$
                       \begin{align*}
                       &\qquad
                       \tanh \big( (1 - \|\mathbf{v}_p^{k,m}[j]\|_2^2) 
                        \cdot \|\tilde{\mathbf{v}}^{k,m}[j]\|_2 \big) \frac{\tilde{\mathbf{v}}^{k,m}[j]}{\|\tilde{\mathbf{v}}^{k,m}[j]\|_2} 
                       \end{align*}
                       \State Branch lengths: \begin{align*}
                           \qquad\qquad \mathcal{B}_{L,r}^{k,m}[j] &\gets d(\mathbf{v}_{L,r}^{k,m}[j], \dot{\mathbf{v}}^{k,m}[j])\\
                           \qquad \qquad\mathcal{B}_{R,r}^{k,m}[j] &\gets d(\mathbf{v}_{R,r}^{k,m}[j], \dot{\mathbf{v}}^{k,m}[j])
                       \end{align*}
                       \State Compute potentials:
                       \[\qquad\qquad\qquad
                       w_{r}^{k,m}[j] \coloneqq \dfrac{\pi(s_{r}^{k,m}[j])}{\pi(s_{r-1}^{a_{r-1}^k})}\cdot \dfrac{\nu^{-}(s_{r-1}^{a_{r-1}^k})}{q(s_{r}^{k,m}[j] |s_{r-1}^{a_{r-1}^k})} \]
                       \[\qquad\qquad\qquad\qquad
                       \cdot \, \text{det} \left( \dfrac{\partial \beta_{i,r}^{k,m}(\dot \bv^{k,m}[j])}{\partial \dot \bv^{k,m}[j]}\right)
                       \]
                   \EndFor
               \EndFor
               \State Extend: \[\qquad s_r^k = s_{r}^{k,i}[J] \text{ where } P(J=j, I = i) \propto w_{r}^{k,i}[j]\]
               \State Update: $w_r^k = \frac{1}{ML}\sum\limits_{l=1}^{L}\sum\limits_{m=1}^{M}w_{r}^{k,i}[j]$
           \EndFor
       \EndFor
       \State \textbf{Output:} $s_{R}^{1:K}$, $w_{1:R}^{1:K}$
   \end{algorithmic}
   \label{hncsmc1}
\end{algorithm}
\begin{figure*}[h]
    \centering
    \includegraphics[width=\linewidth]{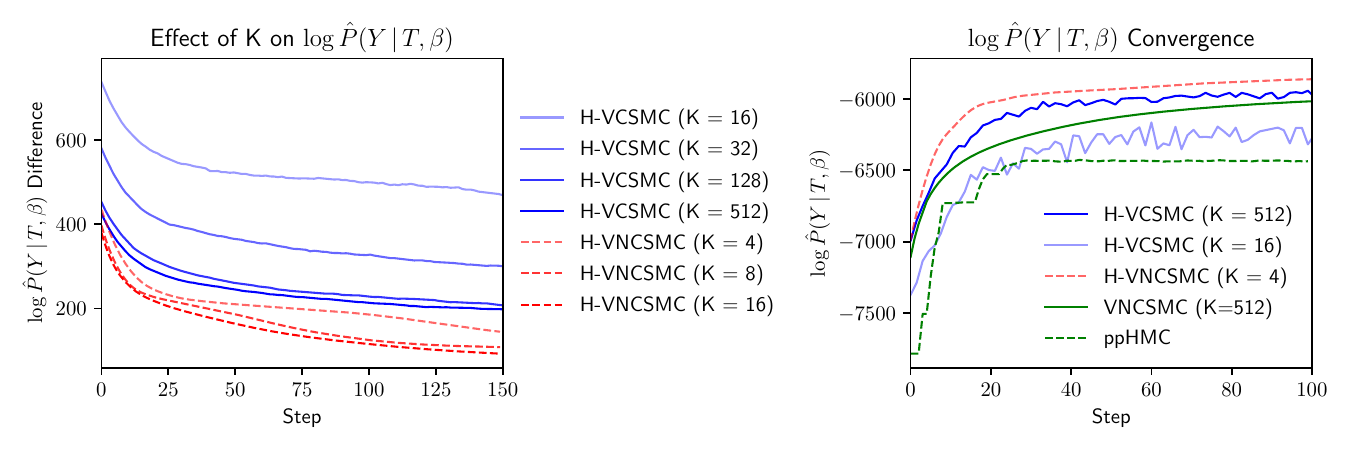}
    \vspace{-2em}
    \caption{\textbf{Left:} As \(K\) increases, the difference in log conditional likelihood \( \log \widehat{P}(Y|\tau,\beta) \) between both \textsc{H-Vcsmc} and \textsc{H-Vncsmc} and MrBayes approaches zero. MrBayes runs MCMC for 20,000 iterations.  \textbf{Right:} Convergence of \(\log \widehat{P}(Y|\tau,\beta)\) for \textsc{H-Vcsmc}, \textsc{H-Vncsmc}, \textsc{Vcsmc}, \textsc{Vncsmc}, and \textsc{ppHmc}. In particular, even at \(K=4\), \textsc{H-Vncsmc} consistently outperforms the other methods. The results are averaged over three random seeds.}
\label{primates}
\end{figure*}

\begin{proposition}
    The particles $s_r^k$ and weights $w_r^k$ generated by Algorithm~\ref{hncsmc1} are properly weighted for $\frac{\pi(s_r) \nu^-(s_{r-1})}{\pi(s_{r-1})}$.
\label{thm:pw2}
\end{proposition}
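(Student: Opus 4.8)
The plan is to reuse the skeleton of the proof of Proposition~\ref{thm:pw}, inserting one extra conditioning layer to absorb the nested loop of Algorithm~\ref{hncsmc1}. Let $\mathcal{G}$ be the $\sigma$-algebra generated by the resampled ancestor index $a_{r-1}^k$ together with all inner quantities produced in the loops over $j=1,\dots,L$ and $m=1,\dots,M$: the proposed embeddings $\dot{\bv}^{k,m}[j]$, the partial states $s_r^{k,m}[j]$, the induced branch lengths, and the inner potentials $w_r^{k,m}[j]$. Relative to $\mathcal{G}$, the \textsc{Extend} step draws the pair $(I,J)$ with $\mathbb{P}(I=i,J=j\mid\mathcal{G}) = w_r^{k,i}[j]\big/\sum_{l,m} w_r^{k,m}[l]$ and sets $s_r^k = s_r^{k,I}[J]$, while the outer weight $w_r^k = \frac{1}{ML}\sum_{l,m} w_r^{k,m}[l]$ is $\mathcal{G}$-measurable.

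Nonnegativity is immediate: each inner potential is a product of the nonnegative ratios $\pi/\pi$, $\nu^-$, $1/q$ and the absolute Jacobian $\lvert\det(\partial\beta_{i,r}^{k,m}/\partial\dot{\bv}^{k,m}[j])\rvert$ of Eq.~\ref{eq:hyperbolic_weights}, so $w_r^k\ge 0$ almost surely. For the identity I would condition on $\mathcal{G}$ and use measurability of $w_r^k$:
\begin{align}
\mathbb{E}\!\left[w_r^k\, h(s_r^k)\,\middle|\,\mathcal{G}\right]
&= w_r^k\sum_{i=1}^{M}\sum_{j=1}^{L}\frac{w_r^{k,i}[j]}{\sum_{l,m} w_r^{k,m}[l]}\,h\!\left(s_r^{k,i}[j]\right) \nonumber\\
&= \frac{1}{ML}\sum_{i=1}^{M}\sum_{j=1}^{L} w_r^{k,i}[j]\,h\!\left(s_r^{k,i}[j]\right),
\end{align}
since the normalizing sum cancels against the definition of $w_r^k$. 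Taking the total expectation leaves $\mathbb{E}[w_r^k h(s_r^k)] = \frac{1}{ML}\sum_{i,j}\mathbb{E}[w_r^{k,i}[j]\,h(s_r^{k,i}[j])]$. For each fixed $(i,j)$ the pair $(s_r^{k,i}[j], w_r^{k,i}[j])$ is generated by precisely the sub-procedure of Proposition~\ref{thm:pw} --- sample a tangent vector, parallel-transport to $\bv_p$, apply the exponential map, form the branch lengths, reweight by the Jacobian --- with the topology extension fixed to the deterministic $j$-th merge instead of a uniform draw over the $L$ merges. The same change-of-variables computation then identifies $\mathbb{E}[w_r^{k,i}[j]\,h(s_r^{k,i}[j])]$ with the $j$-th topology's contribution to $\int h(s_r)\tfrac{\pi(s_r)\nu^-(s_{r-1})}{\pi(s_{r-1})}\,\mathrm{d}s_r$, including the factor $L$ coming from the uniform topology proposal in $q$; summing over $j=1,\dots,L$ reconstructs the $\sum_\tau$ appearing in the proof of Proposition~\ref{thm:pw}, and the $M$ replicates are identically distributed given $a_{r-1}^k$. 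Hence $\sum_{i,j}\mathbb{E}[w_r^{k,i}[j]h(s_r^{k,i}[j])] = ML\int h(s_r)\tfrac{\pi(s_r)\nu^-(s_{r-1})}{\pi(s_{r-1})}\,\mathrm{d}s_r$, and dividing by $ML$ gives proper weighting.

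I expect the main obstacle to be the bookkeeping of normalizing constants rather than any new analytic content: one must check that the $1/L$ from the uniform topology proposal, the $1/M$ replicate average, and the $\frac{1}{ML}$ in the definition of $w_r^k$ telescope so that the nested potentials compose into a genuine importance weight for $\tfrac{\pi(s_r)\nu^-(s_{r-1})}{\pi(s_{r-1})}$ --- this is the hyperbolic instance of the ``random-weight''/nested-\textsc{Smc} argument \citep{naesseth2015nested,pmlr-v161-moretti21a}. A secondary point is confirming that the Jacobian factor in Eq.~\ref{eq:hyperbolic_weights} mediates the $(\beta,\tau)\leftrightarrow(\dot{\bv},\tau)$ change of variables at the inner level exactly as in \textsc{H-Csmc}. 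Once proper weighting is established, unbiasedness of $\widehat{\mathcal{Z}}_{HNCSMC}$ and asymptotic consistency follow from the generic \textsc{Smc} guarantees, as for \textsc{H-Csmc}.
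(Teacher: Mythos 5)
Your proposal follows essentially the same route as the paper's proof: condition on the inner sub-weights so that the selection probability of $(I,J)$ cancels against the normalizer in $w_r^k = \frac{1}{ML}\sum_{l,m}w_r^{k,m}[l]$, reduce to $\frac{1}{ML}\sum_{i,j}\mathbb{E}[w_r^{k,i}[j]\,h(s_r^{k,i}[j])]$, and then apply the Proposition~\ref{thm:pw} change-of-variables term by term, with the sum over $j$ reassembling the sum over topologies. Your version is in fact slightly more careful than the paper's (explicit $\sigma$-algebra, explicit nonnegativity, and you avoid the paper's loose step of treating all $(i,j)$ terms as identically distributed when only the $M$ replicates are), but the argument is the same.
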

\begin{proof}
\begin{align}
    \mathbb{E}[w_r^k h(s_r^k)] &= \mathbb{E}\left[w_r^k \cdot h(s_r^{k,I}[J])\right]   
    \nonumber \\
    &= \mathbb{E}\left[\sum_{j=1}^L \sum_{i=1}^M w_r^k \frac{w_r^{k,i}[j]}{\sum_l \sum_m w_r^{k,m}[l]} h(s_r^{k,i}[j])\right] 
    \nonumber \\
    &= \frac{1}{ML} \sum_{j=1}^L \sum_{i=1}^M\mathbb{E}\left[w_r^{k,i}[j] \cdot h(s_r^{k,i}[j])\right] 
    \nonumber \\
    &= \mathbb{E}\left[w_r^{k,i}[j] \cdot h(s_r^{k,i}[j])\right] 
    \nonumber 
\end{align}
\begin{align}
    &= \sum_{\tau} \int h( \dot{\bv}_r^{k,m}[j], \tau_r^{k}[j]) 
    \dfrac{\pi(\dot \bv_{r}^{k,m}[j], \tau_r^k)\cdot \nu^{-}(\dot \bv_{r-1}^{a_{r-1}^k}, \tau_{r-1}^{a_{r-1}^k})}{\pi(\dot \bv_{r-1}^{a_{r-1}^k}, \tau_{r-1}^{a_{r-1}^k})} \nonumber \\
    &\quad \cdot \, \text{det} \left( \dfrac{\partial \beta_{i,r}^{k,m}(\dot \bv^{k,m}[j])}{\partial \dot \bv^{k,m}[j]}\right) d \dot{\bv}_r^{k,m}[j] 
    \nonumber 
\end{align}
Replacing index $i$ for index $m$ so that $i$ can be used to denote left and right branch lengths $\beta_i$:
\begin{align}
    &= \sum_{\tau} \int h(\beta_{i,r}^{k,m}[j], \tau_r^k[j]) 
    \dfrac{\pi(\beta_{i,r}^{k,m}[j], \tau_r^k) \nu^{-}(\beta_{i,r-1}^{a_{r-1}^k}, \tau_{r-1}^{a_{r-1}^k})}{\pi(\beta_{i,r-1}^{a_{r-1}^k}, \tau_{r-1}^{a_{r-1}^k})} 
    \nonumber \\
    &\quad \cdot \, \mathrm{d}\beta_{i,r}^{k,m}[j] 
    \nonumber \\
    &= \int h(s_r^k) \frac{\pi(s_r^k) \nu^-(s_{r-1}^{a_{r-1}^k})}{\pi(s_{r-1}^{a_{r-1}^k})} \, \mathrm{d}s_r^k 
    \nonumber
\end{align}
\end{proof}

\textsc{H-Ncsmc} iterates through rank events (Step 3) and Monte Carlo samples (Step 4), performing a \textsc{Resample} step (Step 5). For each sample $k$, the algorithm enumerates all $\binom{N-r}{2}$ possible one-step-ahead topologies and samples $M$ corresponding sub-branch lengths. For each of the $j= 1, \dots, \binom{N-r}{2}$ topologies, the two coalescing embedded points $\bv_{L,r}^k[j]$ and $\bv_{R,r}^k[j]$ (the left and right child nodes) are used to identify the closest point on the geodesic between them to the origin (Step 10). The parent node is then sampled in the tangent plane at the origin (Step 11) and mapped via parallel transport to its mean (Step 12). The exponential map projects these samples back onto the manifold (Step 13). The left and right branch lengths $\beta_{L,r}^{k,m}[j]$, $\beta_{R,r}^{k,m}[j]$ are computed using the geodesic distance between the sampled parent and child nodes (Step 14). Each sampled look-ahead state $s_r^{k,m}[j]$ receives a sub-weight or potential function (Step 15). The partial state for $s_r^k$ is extended by selecting one of the $\binom{N-r}{2}$ topologies and corresponding branch lengths based on their weights (Step 18). Finally, each sample's weight is computed by averaging over all sub-weights (Step 19).

We utilize the \textsc{H-Csmc} and \textsc{H-Ncsmc} algorithms to establish objective functions for \textsc{Vi}, resulting in the \textsc{H-Vcsmc} and \textsc{H-Vncsmc} methods. These algorithms are also employed to formulate variational objectives via a lower bound on the marginal likelihood estimate.

\section{Experiments}
Experiments were run on a Lambda Cloud \textsc{Gpu} instance equipped with a 30-core \textsc{Amd} \textsc{Epyc} 7J13 \textsc{Cpu}, plus an \textsc{Nvidia} A100 \textsc{Gpu} with 40 GB of VRAM. We assumed a uniform prior over topologies and a $\lambda_{bl}=10$ exponential prior over branch lengths.

\paragraph{Primates Dataset.} We applied \textsc{H-Vcsmc} and \textsc{H-Vncsmc} to the primate mitochondrial DNA dataset \citep{10.1093/oxfordjournals.molbev.a040524}, consisting of 12 taxa across 898 sites, yielding 13,749,310,575 distinct topologies. Our methods were compared to MrBayes, \textsc{ppHmc} along with reimplementations of \textsc{Vcsmc} and \textsc{Vncsmc}. In Fig.~\ref{primates} (left), we find that as \(K\) increases, the difference in log conditional likelihood \( \log \widehat{P}(Y|\tau,\beta) \) between both \textsc{H-Vcsmc} and \textsc{H-Vncsmc} and MrBayes approaches zero. MrBayes runs MCMC for 20,000 iterations. Fig.~\ref{primates}  (right) displays convergence of \(\log \widehat{P}(Y|\tau,\beta)\) for \textsc{H-Vcsmc}, \textsc{H-Vncsmc}, \textsc{Vcsmc}, \textsc{Vncsmc}, and \textsc{ppHmc}. In particular, even at \(K=4\), \textsc{H-Vncsmc} consistently outperforms the other methods. Results are averaged over three random seeds.

\begin{figure*}[h!]
    \centering
    \includegraphics[width=1.\linewidth]{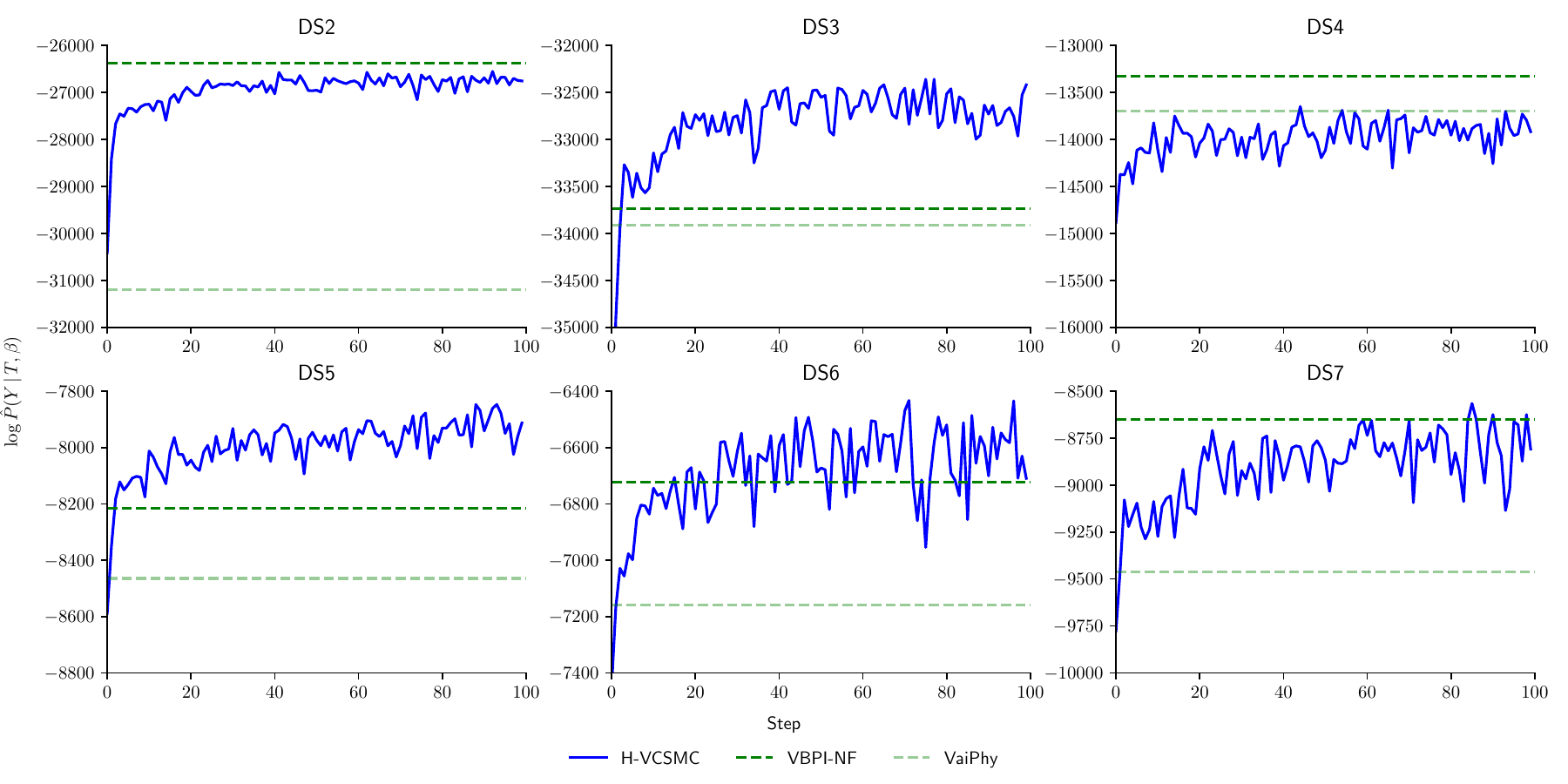}
    \caption{Convergence of \textsc{H-Vcsmc} on 6 out of 7 benchmark datasets. \textsc{H-Vcsmc} often reaches higher values than baseline methods prior to convergence, indicating greater efficiency.
}
\end{figure*}

\begin{table*}[h!]
\centering
\caption{Comparison of marginal log likelihood estimates across 7 datasets using various inference techniques.}
\scalebox{.75}{
\begin{tabular}{lcccccccccc}
\toprule
\textbf{Dataset} & \textbf{H-VCSMC} & \textbf{H-VNCSMC} & \textbf{VBPI-NF} & \textbf{MrBayes} & \textbf{Vaiphy} & \multicolumn{2}{c}{\textbf{VCSMC-PyTorch}} & \textbf{GeoPhy} & \textbf{Dodonophy} \\
& & & & & & \textbf{K=512} & \textbf{K=1024} & & \\
\midrule
DS1 & -7639.21 & -7290.18 & -7108.40 & \textbf{-7032.45} & -7490.54 & -7728.43 & -7669.21 & -7111.70 & -8042.10 \\
DS2 & -26799.23 & -26500.64 & -26367.70 & \textbf{-26363.85} & -31203.44 & -27133.01 & -27017.15 & -26372.84 & -26777.43 \\
DS3 & -32535.99 & \textbf{-31818.26} & -33735.09 & -33729.60 & -33911.13 & -33121.44 & -32851.80 & -33766.49 & -34437.62 \\
DS4 & -13954.06 & -13849.76 & -13329.93 & \textbf{-13292.37} & -13700.86 & -14212.81 & -14119.78 & -13343.06 & -15070.36 \\
DS5 & -8141.26 & \textbf{-8012.42} & -8214.61 & -8192.96 & -8464.77 & -8402.81 & -8367.93 & -8233.22 & -13702.80 \\
DS6 & -6960.84 & -6789.01 & -6724.36 & \textbf{-6571.02} & -7157.84 & -6976.13 & -6888.14 & -6734.07 & -9595.49 \\
DS7 & -9116.36 & -8922.34 & -8650.49 & \textbf{-8438.78} & -9462.21 & -9509.71 & -9493.81 & -8658.93 & -- \\
\bottomrule
\end{tabular}
}
\label{table:comparison1}
\end{table*}

\begin{table*}[h!]
\centering
\caption{Running Times in Decimal Hours for Various Datasets Across Different Methods.}
\scalebox{.75}{
\begin{tabular}{lccccccc}
\toprule
\textbf{Dataset} & \textbf{GeoPhy} & \textbf{VCSMC Original} & \textbf{VCSMC Original} & \textbf{VCSMC K=512} & \textbf{VCSMC K=1024} & \textbf{H-VCSMC} & \textbf{MrBayes} \\
 & & \textbf{K=512} & \textbf{K=1024} & \textbf{(PyTorch)} & \textbf{(PyTorch)} & \textbf{K=512} & \\
\midrule
DS1 & 6.78 & 1.07 & 2.25 & 0.28 & 0.45 & 0.16 & 1.864 \\
DS2 & 6.69 & 1.57 & 3.27 & 0.25 & 0.48 & 0.21 & 2.8275 \\
DS3 & 8.08 & 1.85 & 3.77 & 0.23 & 0.44 & 0.21 & 3.0277 \\
DS4 & 7.67 & 1.40 & 2.88 & 0.18 & 0.33 & 0.19 & 1.9722 \\
DS5 & 8.67 & 0.55 & 1.17 & 0.09 & 0.16 & 0.15 & 0.9138 \\
DS6 & 8.20 & 2.08 & 4.20 & 0.23 & 0.41 & 0.24 & 1.4638 \\
DS7 & 8.88 & 2.63 & 5.28 & 0.26 & 0.65 & 0.31 & 1.5194 \\
\bottomrule
\end{tabular}
}
\label{table:running_times}
\end{table*}

\paragraph{Large Taxa Benchmarks.} We evaluated \textsc{H-Vcsmc} on seven large phylogenetic benchmark datasets, ranging from 27 to 64 eukaryote species and spanning 378 to 2520 sites~\citep{10.1093/oxfordjournals.molbev.a040628, Garey1996, 10.1080/10635150390235557, doi:10.1080/15572536.2004.11833059, lakner, doi:10.1080/00275514.2001.12063167,  doi:10.1080/00275514.2001.12061283}. Table \ref{table:comparison1} compares log conditional likelihood values across several methods. \textsc{H-Vcsmc} and \textsc{H-Vncsmc} show competitive performance compared to existing approaches. \textsc{Vbpi-Nf} uses precomputed tree topology support and extensive bootstrap sampling. While MrBayes achieves the highest likelihood on most datasets, it requires 4.9$\times- 14.4\times$ longer running times (see Table \ref{table:running_times}). Notably, \textsc{H-Vcsmc} demonstrates efficient convergence, running 40$\times-60\times$ faster than GeoPhy while achieving higher likelihood values on two datasets and closely matching its performance on another. 
Results use MrBayes Stepping Stone sampling with four chains for 10,000,000 iterations, sampling every 100 iterations, following VaiPhy's default settings.

\paragraph{Implementation Details.}
Algorithm~\ref{sample_parent_embeddings} incurs minimal overhead, with complexity unchanged from the original methods except for embedding points in the Poincaré disk. The \textsc{Ncsmc} algorithm increases computational cost by marginalizing intermediate target densities to improve partial state exploration. While \textsc{H-Csmc} runs in $\mathcal{O}(KNM)$, \textsc{H-Ncsmc} requires $\mathcal{O}(KN^3M)$. The fast execution of \textsc{H-Vcsmc} is due to an efficient PyTorch implementation, fully vectorized across particles and sites except for resampling. 

\section{Conclusion} 
We have introduced hyperbolic extensions to the Combinatorial and Nested Combinatorial Sequential Monte Carlo algorithms, leading to the development of two hyperbolic \textsc{Vi} methods. Both \textsc{H-Vcsmc} and \textsc{H-Vncsmc} demonstrate strong empirical performance, significantly improving both scalability and effectiveness in high-dimensional inference. Our results show that these techniques offer a powerful and computationally efficient alternative for phylogenetic modeling tasks. An implementaiton of both \textsc{H-Vcsmc} and \textsc{H-Vncsmc} along with a high-performance PyTorch reimplementation of \textsc{Vcsmc} is available at \url{https://github.com/axchen7/vcsmc}.

\vspace{-.5em}
\paragraph{Acknlowledgements}
We acknowledge funding from the Atlanta University Consortium Data Science Initiative. 

\clearpage
\bibliography{iclr2025_conference}
\bibliographystyle{iclr2025_conference}

\clearpage
\onecolumn
\appendix
\section{Appendix}

\subsection{Poincaré Disk Model}
The \textit{Poincaré disk model} represents hyperbolic geometry within the unit disk in Euclidean space. This model is defined on the open unit disk, denoted as 
\begin{equation}
    \mathcal{P} \coloneqq \left\{ \bz \in \mathbb{R}^2 : \|\bz\| < 1 \right\} \equiv \{\bz \in \mathbb{C}: |\bz| < 1\}  \,.
\end{equation}
Let \( \|\bz\| \) denote the Euclidean norm of the point \( \bz \). The hyperbolic distance between two points \( \bz_1 \) and \( \bz_2 \) inside the unit disk is given by:
\begin{equation}
    d(\bz_1, \bz_2) = \text{arcosh}\left(1 + \frac{2 \|\bz_1 - \bz_2\|^2}{(1 - \|\bz_1\|^2)(1 - \|\bz_2\|^2)}\right)\,.
    \label{poincare_distances}
\end{equation}

\subsubsection{Metric Tensor}
The \textit{metric tensor} in the Poincaré disk is defined using a conformal modification of the Euclidean metric:
\begin{equation}
    ds^2 = \frac{4(dx^2 + dy^2)}{(1-x^2-y^2)^2}\,,
\end{equation}
where $dx^2 + dy^2$ is the standard Euclidean metric in $\mathbb{R}^2$ and $4/(1-x^2-y^2)^2$ is a conformal scaling factor dependent upon distance from the origin. As points approach the boundary of the disk ($x^2 + y^2 \rightarrow 1$), distances tend to infinity and the metric becomes increasingly large. 

\subsubsection{Geodesics in the Poincaré Disk}
\label{geodesics}
\begin{figure}
\begin{tikzpicture}[scale=3]

    \draw[thick, black] (0,0) circle [radius=1];
    \draw[thick, black] (0.4583333333333333, 1.0416666666666665) circle [radius=0.5432668671002207];
    
    \coordinate (P) at (0,0.75);
    \coordinate (Q) at (0.5,0.5);
    \coordinate (C) at (0,0);
    \coordinate (Pinv) at (0.0, 1.3333333333333333);
    \coordinate (Qinv) at (1.0, 1.0);
    \coordinate (CGeo) at (0.46,1.04);
    \coordinate (Pinter) at (-0.08305646147470835, 0.9965448430488716);
    \coordinate (Qinter) at (0.7908312604023223, 0.612034245422978);    
    \def\radius{0.54}
    
    \draw[red] (P) arc[start angle=-147.53, end angle=-85.60, radius=\radius];
    
    \draw[fill=red, draw=black] (P) circle (1pt);
    \draw[fill=red, draw=black] (Q) circle (1pt);
    \draw[fill=blue, draw=black] (C) circle (1pt);
    \draw[fill=purple, draw=black] (Pinv) circle (1pt);
    \draw[fill=purple, draw=black] (Qinv) circle (1pt);
    \draw[fill=brown, draw=black] (Pinter) circle (0.5pt);
    \draw[fill=brown, draw=black] (Qinter) circle (0.5pt);
    \draw[dash pattern=on 0.45pt off 1pt] (C)--(Pinv);
    \draw[dash pattern=on 0.45pt off 1pt] (C)--(Pinter);
    \draw[dash pattern=on 0.45pt off 1pt] (C)--(Qinv);
    \draw[dash pattern=on 0.45pt off 1pt] (C)--(Qinter);

    \node[above] at (-0.2, 1.3) {$P^{-1}$};
    \node[above] at (1.2, 1.) {$Q^{-1}$};
    \node[above] at (-0.2, .6) {$P$};
    \node[above] at (.6, .2) {$Q$};
    
\end{tikzpicture}
\centering
\label{fig:poincare_geodesics}
\caption{Geodesic arc in the Poincaré disk connecting points \( P \) and \( Q \) depicted in red, with their corresponding inverses \( P^{-1} \) and \( Q^{-1} \). A circle orthogonal to the unit circle that passes through points \( P \) and \( Q \) must also intersect their inverses, \( P^{-1} \) and \( Q^{-1} \), which are the reflections of \( P \) and \( Q \) with respect to the unit circle.}
\end{figure}
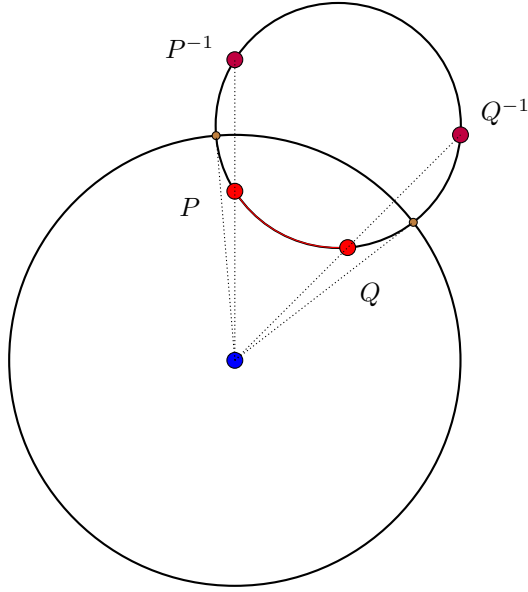
In the Poincaré disk model, geodesics are represented by arcs of circles that meet the boundary of the disk at right angles or by diameters of the disk, as shown in Fig.~\ref{fig:poincare_geodesics}. According to the Tangent-Secant Theorem, any circle orthogonal to the unit circle that passes through the points \(P\) and \(Q\) must also intersect their circular inverses, \(P^{-1}\) and \(Q^{-1}\), which are reflections with respect to the unit circle. As a result, the hyperbolic line between \(P = (x_1, y_1)\) and \(Q = (x_2, y_2)\) corresponds to an arc of the circle that passes through \(P\), \(Q\), and
\begin{equation}
    P^{-1} = \left( \frac{x_1}{x_1^2 + y_1^2}\,, \,\frac{y_1}{x_1^2 + y_1^2} \right)\,.
\end{equation}
To plot the geodesic arc, find the center and radius of the circle passing through \(P\), \(Q\), and \(P^{-1}\). This can be done by mapping each coordinate pair \((x, y)\) to the complex plane as \(z = x + iy\), and applying the linear transformation:
\begin{equation}
z \rightarrow \frac{z_3 - z_1}{z_2 - z_1}\,,
\end{equation}
which maps the points \((z_1, z_2, z_3)\) to new coordinates:
\begin{equation}
\left(0, 1, w = \frac{z_3 - z_1}{z_2 - z_1}\right)\,.
\end{equation}
Imposing the condition that each transformed point satisfies the equation for a circle \( |z - c| = r \), derive the system of equations:
\begin{align}
|c|^2 &= r^2 \\
1 - c - \bar{c} + |c|^2 &= r^2 \\
|w|^2 - \bar{w} c - w \bar{c} + |c|^2 &= r^2 
\end{align}
Solving this system yields the center of the transformed circle as:
\begin{equation}
c' = \frac{w - |w|^2}{w - \bar{w}}\,.
\end{equation}
To revert this transformation back to the original coordinates, the center becomes:
\begin{equation}
c = (z_2 - z_1)\frac{w - |w|^2}{w - \bar{w}} + z_1\,.
\end{equation}
Finally, the radius is determined by the relation:
\begin{equation}
r = |z_1 - c|\,.
\end{equation}

\subsubsection{Parallel Transport and Exponential Map}
\label{parallel_transport_and_expmap}
Hyperbolic space is not a vector space, which renders the linear operations typically employed in vector spaces inapplicable. Riemannian geometry requires special tools to understand how vectors change as they move across curved surfaces. \textit{Parallel transport} adjusts vectors along a curve while preserving their relationship to the surface's curvature, ensuring consistent comparisons between different points. The \textit{exponential map} projects directions from the tangent space to the manifold along geodesics, allowing us to translate local, linear information into motion across the curved space.

\paragraph{Parallel transport.} Given two vectors $\bx, \by \in \mathcal{P}$, the parallel transport from $\bx$ to $\by$ is defined as a map $\text{PT}_{\bx \to \by}$ from the tangent plane at $\bx$, $T_{\bx}\mathcal{P}$, to the tangent plane at $\by$, $T_{\by}\mathcal{P}$. This map transports a vector in $T_{\bx}\mathcal{P}$ along the geodesic from $\bx$ to $\by$ in a parallel manner, preserving the metric tensor throughout the process. When $\bx$ is the origin $\mu_0$, \cite{NEURIPS2018_dbab2adc} shows that parallel transport is defined as: 
\begin{equation}
    PT_{\mu_0 \rightarrow \by}(\bv) \coloneqq \frac{2}{\lambda_\by^K}\bv\,,
\end{equation}
where $\bv$ denotes the vector being transported from tangent plane at the origin $\mu_0$ to the tangent plane at point $\by$ in the Poincare disk and $\lambda_\by^K$ is the conformal factor defined below:
\begin{equation}
    \lambda_\by^K \coloneqq \frac{2}{1+K||\bv||_2^2}\,.
\end{equation}
In the Poincaré disk, $K=-1$ so that the conformal factor simplifies to:
\begin{equation}
    \lambda_\by= \frac{2}{1 - ||\by||_2^2}\,,
\end{equation}
and parallel transport simplifies to:
\begin{equation}
    PT_{\mu_0 \rightarrow \by}(\bv) = (1-||\by||_2^2)\bv\,.
\end{equation}

\paragraph{Möbius addition.}
In hyperbolic space, Euclidean vector addition does not account for the curvature that affects distances and angles. Möbius addition is an operation for combining points taking into account the negative curvature of the manifold.
Möbius addition \( \oplus_K \) for \( x, y \in \mathbb{M}_K \) (considering both signs of \( K \)) is defined as
\begin{equation}
    \bx \oplus_K \by \coloneqq \frac{(1 - 2K \langle \bx, \by \rangle - K ||\by||_2^2)\bx + (1 + K||\bx||_2^2)\by}{1 - 2K\langle \bx, \by \rangle  + K^2||\bx||^2_2||\by||^2_2 }\,.
\end{equation}
which simplifies when $K = -1$ to the following inside the Poincaré  disk:
\begin{equation}
    \bx \oplus \by = \frac{(1 + 2 \langle \bx, \by \rangle + ||\by||_2^2)\bx + (1 - ||\bx||_2^2)\by}{1 + 2\langle \bx, \by \rangle  + ||\bx||^2_2||\by||^2_2 }\,.
\end{equation}
For additional details see \cite{ungar2009gyrovector}.

\paragraph{Exponential and logarithmic map.} 
The exponential map converts directions from the flat tangent plane into movement along the geodesics of the curved surface. The exponential map is defined as:
\begin{equation}
\exp_\bx^K(\bv) = \bx \oplus_K \left( \tanh \left( \sqrt{-K} \frac{\lambda_x^K ||\bv||_2}{2} \right) \frac{\bv}{\sqrt{-K}||\bv||_2} \right)\,
\end{equation}
which simplifies to:
\begin{equation}
\exp_x(\bv) = x \oplus \left( \tanh \left( \frac{\lambda_x^{-1} ||\bv||_2}{2} \right) \frac{\bv}{||\bv||_2} \right)\,.
\end{equation}
Recall that:
\begin{equation}
\lambda_\bx = \frac{2}{1 - ||\bx||_2^2}\,.
\end{equation}
Therefore, we have:
\begin{equation}
\exp_x(\bv) = \bx \oplus \left( \tanh \left( (1 - ||\bx||_2^2)||\bv||_2 \right) \frac{\bv}{||\bv||_2} \right)\,.
\end{equation}
The logarithmic map, which is the inverse of the exponential map, is defined in the Poincaré disk as follows:
\begin{equation}
    \log_{\bx}(\by) \coloneqq \tanh^{-1} \left(\|\by\|_2\right)\frac{\by}{\|\by\|_2}\,.
\end{equation}

\subsubsection{Circles in the Poincar\'e Disk}
In this section, we will derive a basic formula for converting between the hyperbolic center $
\mathbf{c}$ and radius $r$ of a circle and the corresponding Euclidean center $\mathbf{c}'$ and radius $r'$.
This is useful for plotting and for the subsequent branch length derivation in Section~\ref{branch_lengths}.
For a visualization of the relationship between $\mathbf{c}, \mathbf{c'}, r$, and $r'$, see the left side of Figure~\ref{fig:twocircles}.

Without loss of generality, let us assume that $\mathbf{c}$ lies on the $x$-axis. Throughout this derivation, we will use boldface (e.g. $\mathbf{c}$) to refer to points, and $c$ to refer to just their $x$-coordinates.\footnote{Our derivation is most simply expressed for points collinear on the line $y=0$. To transform this to an arbitrary line, observe that the $x$-coordinates are also \textit{norms} of vectors; thus, they can be used to multiply a unit vector pointing from the origin of the Poincar\'e disk to the (hyperbolic) center of our circle $\mathbf{c}$.} Our goal is to find $x_l$ and $x_r$ such that $d(\mathbf{c}, \mathbf{x_l}) = d(\mathbf{c}, \mathbf{x_r}) = r$ and $x_l < c < x_r$, and use these diametrically-opposed points to compute a Euclidean center and radius.

We do so as follows:
\begin{align}
    d(\mathbf{c}, \mathbf{x_l}) = r &= \text{arcosh}\left( 1 + \frac{2(\|\mathbf{c} - \mathbf{x_l}\|^2)}{(1-\|\mathbf{x_l}\|^2)(1 - \|\mathbf{c}\|^2} \right)\\
    r &= \text{arcosh}\left( 1 + \frac{2(c - x_l)^2}{(1-x_l^2)(1 - c^2)} \right)\\
    \cosh(r) - 1 &= \frac{2(c - x_l)^2}{(1-x_l^2)(1 - c^2)}
\end{align}
Letting $u = \cosh(r) - 1$ and $v = 1 - c^2$:
\begin{align}
    \frac{2(c^2 + x_l^2 - 2cx_l)}{1 - x_l^2} &= uv\\
    2c^2 + 2x_l^2 - 4cx_l &= uv - uvx_l^2\\
    \underbrace{(2+uv)}_{A}x_l^2 + \underbrace{(-4c)}_{B}x_l + \underbrace{(2c^2 - uv)}_{C} &= 0
\end{align}
By the quadratic formula:
\begin{align}
    x_l &= \frac{-B - \sqrt{B^2 - 4AC}}{2A}\\
    x_r &= \frac{-B + \sqrt{B^2 - 4AC}}{2A}\\
    c' &= \frac{x_l + x_r}{2} = \frac{-B}{2A} = \frac{2c}{2 + (\cosh(r)-1)(1 - c^2)}
\end{align}
Thus, for any $c \in [-1, 1]$,
\begin{equation}
    c' = \frac{c}{1 + \frac{1}{2}(\cosh(r)-1)(1-c^2)}
\end{equation}
The Euclidean radius is simply:
\begin{align}
    r' = \frac{x_r - x_l}{2}.
\end{align}

\begin{figure}[t]
   \centering
   \begin{tikzpicture}[scale=3]
       \draw[thick] (0,0) circle (1cm);
       \draw[thick, color=blue] (.3333, 0) circle (0.6547);
       \node[above] at (0.8038, 0) {$\mathbf{c}$};
       \node[above] at (.3333, 0) {$\mathbf{c}'$};
       \draw[thick, color=teal] (.3333, 0) -- (.3333, -0.6547);
       \node[left] at (.3333, -0.3274) {$r'$};
       \draw[thick, color=red] (.3333, -0.6547) arc (180:108.5875:0.6906);
       \node[right] at (.45, -0.3274) {$r$};
       \node[above] at (-.4, 0) {$\mathbf{x_l}$};
       \node[above] at (1.1, 0) {$\mathbf{x_r}$};
       \draw[fill=black, draw=black] (0.8038, 0) circle (.7pt);
       \draw[fill=black, draw=black] (.3333, 0) circle (.7pt);
       \draw[fill=black, draw=black] (-.3214, 0) circle (.7pt);
       \draw[fill=black, draw=black] (.988, 0) circle (.7pt);
   \end{tikzpicture}
   \hspace{1cm}
   \begin{tikzpicture}[scale=3]
       \draw[thick] (0,0) circle (1cm);
       \draw[thick, color=blue] (0.3822, -0.3196) circle (0.4796);
       \node[right] at (0.6227, -0.5207) {$R$};
       \draw[thick, color=blue] (-0.0190, 0.3600) circle (0.6300);
       \node[left] at (-0.0439, 0.8321) {$L$};
       \node[right] at (0.58, 0.14) {$\alpha$};
       \node[left] at (-0.1, -0.35) {$\tilde{\alpha}$};
       \draw[thick, color=red] (0.6227, -0.5207) arc (224.9637:187.5004:2.3481);
       \node[above] at (0.2, 0.13) {$\dot{\mathbf{v}}_r^k$};
       \draw[thick, color=violet] (0.1826, 0.0910) arc (273.0392:276.7658:5.9020);
       \draw[thick, color=violet] (0.1826, 0.0910) arc (139.9687:144.3570:5.8952);
       \fill (0.6227, -0.5207) circle (.7pt);
       \fill (-0.0439, 0.8321) circle (.7pt); 
       \fill (0.5650, 0.1238) circle (.7pt);
       \fill (-0.0943, -0.2655) circle (.7pt);
       \fill (0.1826, 0.0910) circle (.7pt);
   \end{tikzpicture}
   \caption{\textbf{Left: } Hyperbolic circles are also Euclidean circles (blue): the Euclidean distance from any point on the circle to $\mathbf{c'}$ is the Euclidean radius $r'$ (teal), and the hyperbolic distance from any point on the circle to $\mathbf{c}$ is the hyperbolic radius $r$ (red). \textbf{Right: }To visualize how branch lengths $(\beta_{R,r}^k, \beta_{L,r}^k)$ are jointly distributed for coalescing nodes $R$ and $L$, we plot circles (blue) with corresponding hyperbolic radii centered at each node. This shows that any pair of branch lengths corresponds to at most two points in the Poincaré disk. In the above, $\dot{\mathbf{v}}_r^k$ is the mean ancestor node, i.e. the point on the geodesic between $L$ and $R$ (red) closest to the origin. The likelihood of drawing either of the intersection points $\alpha$ or $\tilde{\alpha}$ is depends on their geodesic distances to $\dot{\mathbf{v}}_r^k$ (violet). In practice, since the branch lengths are computed by sampling from a $\mathcal{WN}$ centered at $\dot \bv_r^k$, one of the intersections is typically much closer than the other.}
   \label{fig:twocircles}
\end{figure}
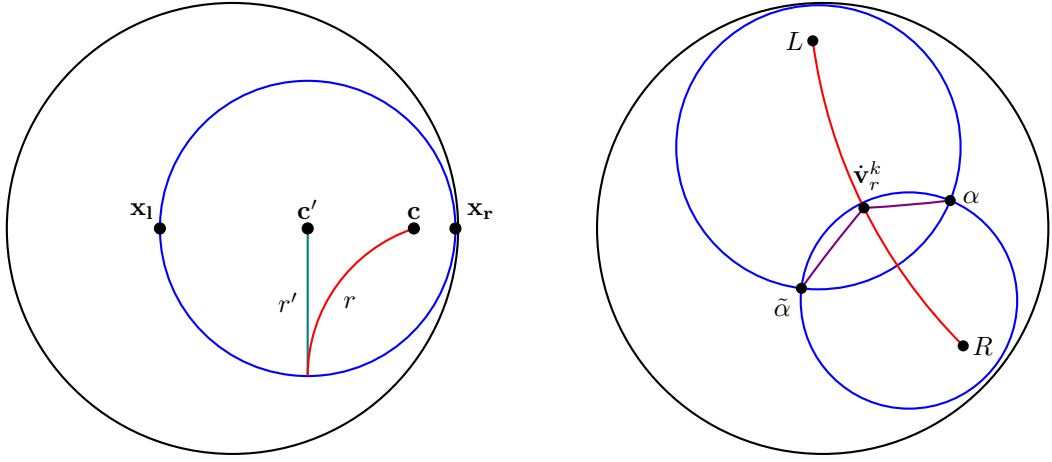

\clearpage
\subsection{Combinatorial Sequential Monte Carlo}
\label{csmc_appendix}
We review several details of the \textsc{Csmc} algorithm. Additional details can be found in~\cite{csmc}.

\subsubsection{Partial States and the Natural Forest Extension}
\label{partial_states}
\begin{definition}[Partial State] A partial state of rank $r \in {0, \cdots, N-1}$, denoted by $s_r = (t_i, X_i)$, represents a collection of rooted trees and must satisfy three key conditions: \begin{enumerate} \item Partial states at different ranks do not overlap, meaning for any distinct ranks $r$ and $s$, their corresponding sets of partial states are disjoint, i.e., $\forall r \neq s$, $\mathcal{S}_r \cap \mathcal{S}_s = \emptyset$. \item The partial state set at the lowest rank consists of a single element, denoted as $S_0 = { \bot }$. \item The partial states at the highest rank, $R = N-1$, correspond to the target space $\mathcal{X}$. \end{enumerate} \end{definition}

The likelihood and the measure $\pi$ are specifically defined for the target space, with $\mathcal{S}_R = \mathcal{X}$. These definitions apply solely to the target space of trees and do not extend to the broader space of partial states $\mathcal{S}_{r<R}$, which is made up of forests with disjoint trees. Felsenstein's pruning algorithm \citep{Felsenstein:1981:J-Mol-Evol:7288891} is  used to score the likelihood of a tree. However, partial states are explicitly collections of disjoint trees or leaf nodes. To extend the target measure $\pi$ to include the partial state space $\mathcal{S}_{r<R}$, one method is to treat each disjoint tree in the forest independently, as described in~\cite{csmc}.

\begin{definition}[Natural Forest Extension] The natural forest extension (\textsc{Nfe}) expands the target measure $\pi$ to include forests by taking the product over the trees that compose the forest: \begin{equation} \pi(s) \coloneqq \prod\limits_{(t_i,X_i)}^{} \pi_{Y_i(x_i)}(t_i),. \end{equation} \end{definition}

A key feature of the \textsc{Nfe} is its ability to incorporate information from non-coalescing elements into the local weight updates.

\subsubsection{Encoding Leaf Nodes in the Poincare Disk}
\begin{definition}[\(\delta\)-Hyperbolicity] 
A geodesic metric space \( \mathcal{X} \) is said to be \emph{\(\delta\)-hyperbolic} if there exists a constant \( \delta \geq 0 \) such that for any four points \( \bw, \bx, \by, \bz \in \mathcal{X}\), the following holds:
\begin{equation}
    d(\bw,\bx) + d(\by,\bz) \leq \max\{ d(\bx,\by) + d(\bw,\bz)\,,\, d(\bx,\bz) + d(\bw,\by)\} + 2\delta
\end{equation}
It is well established that trees are \(0-\)hyperbolic ~\citep{gromov1987hyperbolic}, which aligns with the four-point condition used in Unweighted Pair Group Method with Arithmetic Mean (UPGMA)~\citep{sokal1958statistical}. Intuitively, if the embedding is perfect, the distance between any two leaves in hyperbolic space should correspond to their Hamming distance. In cases of perfect ultrametricity, the points should approach the boundary of the Poincaré Disk. This alignment reflects the structure of the tree, where the distance from the origin can be interpreted as a molecular clock and represents the time until the divergence of organisms.

To further understand the implications of this structure, consider the four-point condition, which states that for any set of four leaves, there are three ways to group them into unordered pairs. The grouping that pairs closely related leaves should yield the minimal sum of pairwise distances, while the other two groupings—representing distances across subtrees—should be approximately equal. In hyperbolic spaces, this behavior is characterized by \(\delta\)-hyperbolicity, which asserts that the deviation between the two maximal pairwise distances is bounded by some \(\delta\), and decreases as points move closer to the boundary. This relationship emphasizes how the molecular clock and the properties of hyperbolic space work in tandem to inform our understanding of evolutionary relationships.

\end{definition}


\subsection{Branch Length Probability Derivation}
\label{branch_lengths}


The mapping between embeddings and branch lengths is not injective. To see this, consider fixing two branch lengths, $\beta_{L,r}^k$ and $\beta_{R,r}^k$, and points for the left and right child nodes, $L$ and $R$, on the disk. The points that are a geodesic distance $\beta_{L,r}^k$ from $L$ form the circumference of a circle. Similarly, the points at a distance $\beta_{R,r}^k$ from $R$ form the circumference of another circle. The intersection of these two circumferences can result in two possible locations for the parent point, as illustrated in the right side of Fig.~\ref{fig:twocircles}. To take into account this two to one mapping, the density of branch lengths will be a sum of two terms, one for each intersection point ($\alpha$ and $\tilde \alpha$) on the disk:
\begin{equation}
     q_\psi(\dot{\mathbf{v}}_r^k \mid \dot{\mathbf{v}}_{r-1}^{a_{r-1}^k}, \tau_{r-1}^{a_{r-1}^k}) = q_\psi(\dot{\mathbf{v}}_r^k \mid \dot{\mathbf{v}}_{r-1}^{a_{r-1}^k} = \alpha, \tau_{r-1}^{a_{r-1}^k}) + q_\psi(\dot{\mathbf{v}}_r^k \mid \dot{\mathbf{v}}_{r-1}^{a_{r-1}^k} = \tilde{\alpha}, \tau_{r-1}^{a_{r-1}^k})
\end{equation}
Due to concentration of measure, the further solution maps to a point where the $\mathcal{WN}$ density is exponentially smaller than the closer point.

\subsubsection{Jacobian Derivation}
Given the distance function between two points \( \bz_1 \) and \( \bz_2 \) in hyperbolic space:
\begin{equation}
    d(\bz_1, \bz_2) = \text{arcosh}\left(1 + \frac{2 \|\bz_1 - \bz_2\|^2}{(1 - \|\bz_1\|^2)(1 - \|\bz_2\|^2)}\right)
\end{equation}
The derivative of \( \text{arcosh}(x) \) is given by:
\begin{equation}
    \frac{d}{dx} \text{arcosh}(x) = \frac{1}{\sqrt{x^2 - 1}}
\end{equation}
Let:
\begin{equation}
    f(\bz_1, \bz_2) = 1 + \frac{2 \|\bz_1 - \bz_2\|^2}{(1 - \|\bz_1\|^2)(1 - \|\bz_2\|^2)}
\end{equation}
We can now apply the chain rule to compute the derivative of \( d(\bz_1, \bz_2) \) with respect to \( \bz_1 \):
\begin{equation}
    \frac{\partial d(\bz_1, \bz_2)}{\partial \bz_1} = \frac{1}{\sqrt{f(\bz_1, \bz_2)^2 - 1}} \cdot \frac{\partial f(\bz_1, \bz_2)}{\partial \bz_1}
\end{equation}
The derivative of \( f(\bz_1, \bz_2) \) with respect to \( \bz_1 \) is given by:
\begin{equation}
    \frac{\partial f(\bz_1, \bz_2)}{\partial \bz_1} = \frac{2 \left( \bz_1 - \bz_2 \right)}{(1 - \|\bz_1\|^2)(1 - \|\bz_2\|^2)} - \frac{4 \|\bz_1 - \bz_2\|^2 \bz_1}{(1 - \|\bz_1\|^4)(1 - \|\bz_2\|^2)}
\end{equation}
The Jacobian matrix \( J_{d(\bz_1, \bz_2)} \) is formed from the partial derivatives of \( d(\bz_1, \bz_2) \) with respect to each component of \( \bz_1 \). The determinant of this Jacobian matrix is given by:
\begin{equation}
    \text{det}(J_{d(\bz_1, \bz_2)}) = \text{det} \left( \frac{\partial}{\partial \bz_1} \left( \frac{1}{\sqrt{f(\bz_1, \bz_2)^2 - 1}} \cdot \frac{\partial f(\bz_1, \bz_2)}{\partial \bz_1} \right) \right)
\end{equation}
Since \( \dot \bv_{r-1}^{a_{r-1}^k} \) can take two possible values ($\alpha$ and $\tilde \alpha$), the Jacobian will capture how the change in \( \dot \bv_{r-1}^{a_{r-1}^k} \) (between $\alpha$ and $\tilde \alpha$) affects the total proposal expression.
Thus, the Jacobian can be written as the sum of two partial derivatives, corresponding to each term in the expression:
\begin{equation}
    \mathbf{J} = \dfrac{\partial \beta_{i,r}^k(\dot \bv^k)}{\partial \dot \bv^k} \coloneqq \frac{\partial}{\partial \dot{\mathbf{v}}_{r-1}^{a_{r-1}^k}} \left( q_\psi(\dot{\mathbf{v}}_r^k \mid \dot{\mathbf{v}}_{r-1}^{a_{r-1}^k} = \alpha, \tau_{r-1}^{a_{r-1}^k}) \right) + \frac{\partial}{\partial \dot{\mathbf{v}}_{r-1}^{a_{r-1}^k}} \left( q_\psi(\dot{\mathbf{v}}_r^k \mid \dot{\mathbf{v}}_{r-1}^{a_{r-1}^k} = \tilde{\alpha}, \tau_{r-1}^{a_{r-1}^k}) \right)
\end{equation}
In our implementation, we define the Jacobian via automatic differentiation in PyTorch. 

\clearpage

\subsection{Hyperbolic CSMC}
\label{hyperboliccsmc}
\textsc{H-Csmc} is described in detail in Algorithm \ref{alg:hcsmc}.

\begin{algorithm}[h!]
    \caption{Combinatorial Sequential Monte Carlo with Poincaré Embeddings}
    \label{alg:hcsmc}
    \begin{algorithmic}[1]
        \State \textbf{Input:} $\mathbf{Y} = \{Y_1,\cdots,Y_N \} \in \Omega^{N \times M}$,  $\theta = (\mathbf{Q},\{\lambda_i\}_{i=1}^{|E|})$
        \State \textbf{Embed sequences in $\mathcal{P}$:} $\forall n, Y_n \rightarrow f(Y_n)$ where $f: \Omega \rightarrow \mathcal{P}$
        \State \textbf{Initialization:} $\forall k$, $s_{0}^k \leftarrow \perp$, $w_{0}^k \leftarrow 1/K$
        \For{$r = 0$ \textbf{to} $R = N-1$}
            \For{$k = 1$ \textbf{to} $K$}
                \State \textsc{Resample:}
                \[
                \mathbb{P}(a_{r-1}^k = i) = \frac{w_{r-1}^i}{\sum_{l=1}^{K} w_{r-1}^l}
                \]
                \State \textsc{Extend partial state:}
                \vspace{1em}
                
                \State \hspace{1em} (a) Sample two embedded points $\mathbf{v}_{L,r}^k$, $\mathbf{v}_{R,r}^k$ to coalesce uniformly.
                \[
                    \mathbb{P}(\bv_{L,r}^k = i, \bv_{R,r}^k = j) = \frac{1}{\binom{N-r+1}{2}}
                \]
                \State \hspace{1em} (b) Find the point on the geodesic arc between $\mathbf{v}_{L,r}$ and  $\mathbf{v}_{R,r}$ closest to the origin:
                \[
                \bv_p \gets \argmin_{\bv \in \gamma(\bv_{L,r}, \bv_{R,r})} 
                d(\mu_0, \bv)
                \]
                \State \hspace{1em} (c) Sample a Gaussian in $\mathbb{R}^2$ to yield a vector in the tangent plane at the origin, $T_{\mu_0}\mathcal{P}$:
                \[
                \mathbf{v}^k \sim \mathcal{N}(0, \Sigma)
                \]     
                \State \hspace{1em} (d) Parallel transport the tangent vector from the origin, $\mathbf{\mu}_0$, to the point $\bv_p$:
                \[
                \tilde{\bv}^k \gets PT_{\mathbf{\mu}_0 \rightarrow \bv_p}(\mathbf{v}^k) = (1 - \|\bv_p\|_2^2)\mathbf{v}^k
                \]
                \State \hspace{1em} (e) Apply the exponential map to the transported vector:
                \[
                        \dot{\bv}^k \gets \exp_{\bv_p}(\tilde{\bv}^k) \nonumber = \bv_p \oplus \left( \tanh \left( (1 - \|\bv_p\|_2^2) \|\tilde{\bv}^k\|_2 \right) \frac{\tilde{\bv}^k}{\|\tilde{\bv}^k\|_2} \right) 
                \]
                \State \hspace{1em} (f) Compute the left and right branch lengths via the geodesics:
                \[
                    \mathcal{B}_{L,r}^k \gets d(\bv_{L,r}, \dot \bv^k)\, , \,
                    \mathcal{B}_{R,r}^k \gets d(\bv_{R,r}, \dot \bv^k)
                \]
                \State \textsc{Compute weights:}
                \[
                w_r^k = \frac{\pi(s_r^k)}{\pi(s_{r-1}^{a_{r-1}^k})} \cdot \frac{\nu^{-}(s_{r-1}^{a_{r-1}^k})}{q(s_r^k | s_{r-1}^{a_{r-1}^k})}\cdot \text{det} \left( \dfrac{\partial d(\dot \bv^k, \bv_{i,r}^k)}{\partial \dot \bv^k}\right)
                \]
            \EndFor
        \EndFor
        \State \textbf{Output:} $s_{R}^{1:K}$ , $w_{1:R}^{1:K}$
    \end{algorithmic}
\end{algorithm}
The marginal likelihood estimator $\widehat{\mathcal{Z}}_{HCSMC}$ and the evidence lower bound (ELBO) $\mathcal{L}_{HCSMC}$ for \textsc{Vi}  are defined below:
\begin{equation}
    \widehat{\mathcal{Z}}_{HCSMC} \coloneqq \|\widehat{\pi}_{R}\| = \prod\limits_{r=1}^{R}\left(\frac{1}{K} \sum\limits_{k=1}^{K}w_{r}^k\right)\,, \qquad \mathcal{L}_{HCSMC} \coloneqq \mathbb{E} \left[\log \widehat{\mathcal{Z}}_{HCSMC} \right]
\end{equation}

\subsection{Hyperbolic Nested Combinatorial Sequential Monte Carlo}
\label{hyperbolicNCSMC}
\begin{algorithm}
    \caption{Nested Combinatorial Sequential Monte Carlo with Poincaré Embeddings}
    \begin{algorithmic}[1]
        \State \textbf{Input:} $\mathbf{Y} = \{Y_1,\cdots,Y_N \} \in \Omega^{N \times M}$,  $\theta = (\mathbf{Q},\{\lambda_i\}_{i=1}^{|E|})$
        \State \textbf{Embed sequences in $\mathcal{P}$:} $\forall n, Y_n \rightarrow f(Y_n)$ where $f: \Omega \rightarrow \mathcal{P}$
        \State \textbf{Initialization:} $\forall k$, $s_{0}^k \leftarrow \perp$, $w_{0}^k \leftarrow 1/K$
        \For{$r = 0$ \textbf{to} $R = N-1$}
            \For{$k = 1$ \textbf{to} $K$}
                \State \textsc{Resample: }
                \[
                \mathbb{P}(a_{r-1}^k = i) = \frac{w_{r-1}^i}{\sum_{l=1}^{K} w_{r-1}^l}
                \]
                \For{$j = 1$ to $L = \binom{N-r}{2}$}
                    \vspace{.5em}
                    \For{$m = 1$ to $M$}
                \State \textsc{Form Look-Ahead Partial State via Algorithm 1:}

                \State  \hspace{1em} Find the point on the geodesic arc between $\mathbf{v}_{L,r}^{k,m}[j]$ and  $\mathbf{v}_{R,r}^{k,m}[j]$ closest to the origin:
                \[
                \mathbf{v}_p^{k,m}[j] \gets \argmin_{\mathbf{v} \in \gamma(\mathbf{v}_{L,r}[j], \mathbf{v}_{R,r}[j])} d(\mu_0, \mathbf{v})
                \]
                \State \hspace{1em} Sample a Gaussian in $\mathbb{R}^2$ to yield a vector in the tangent plane at the origin, $T_{\mu_0}\mathcal{P}$:
                \[
                \mathbf{v}^{k,m}[j] \sim \mathcal{N}(0, \Sigma)
                \]     
                \State \hspace{1em} Parallel transport the tangent vector from the origin, $\mathbf{\mu}_0$, to the point $\mathbf{v}_p$:
                \[
                \tilde{\mathbf{v}}^{k,m}[j] \gets PT_{\mathbf{\mu}_0 \rightarrow \mathbf{v}_{p}[j]}(\mathbf{v}^{k,m}[j]) = (1 - \|\mathbf{v}_p^{k,m}[j]\|_2^2)\mathbf{v}^{k,m}[j]
                \]
                \State \hspace{1em} Apply the exponential map to the transported vector:
                \begin{align*}
                        \dot{\mathbf{v}}^{k,m}[j] &\gets \exp_{\mathbf{v}_p[j]}\nonumber 
                        (\tilde{\mathbf{v}}^{k,m}[j])\\ &= \mathbf{v}_p^{k,m}[j] \oplus \left( \tanh \left( (1 - \|\mathbf{v}_p^{k,m}[j]\|_2^2) \|\tilde{\mathbf{v}}^{k,m}[j]\|_2 \right) \frac{\tilde{\mathbf{v}}^{k,m}[j]}{\|\tilde{\mathbf{v}}^{k,m}[j]\|_2} \right)  \nonumber                  
                \end{align*}
                \State \hspace{1em} Compute the left and right branch lengths via the geodesics:
                \[
                    \mathcal{B}_{L,r}^{k,m}[j] \gets d(\mathbf{v}_{L,r}^{k,m}[j], \dot{\mathbf{v}}^{k,m}[j])\, , \, \mathcal{B}_{R,r}^{k,m}[j] \gets d(\mathbf{v}_{R,r}^{k,m}[j], \dot{\mathbf{v}}^{k,m}[j])
                \]
                \State \textsc{Compute Potentials (Sub-Weights)}
                \[ 
                    w_{r}^{k,m}[j] \coloneqq \dfrac{\pi(s_{r}^{k,m}[j])}{\pi(s_{r-1}^{a_{r-1}^k})}\cdot \dfrac{\nu^{-}(s_{r-1}^{a_{r-1}^k})}{q(s_{r}^{k,m}[j] |s_{r-1}^{a_{r-1}^k})}\,\cdot \, \text{det} \left( \dfrac{\partial \beta_{i,r}^{k,m}(\dot \bv^{k,m}[j])}{\partial \dot \bv^{k,m}[j]}\right)\,,
                \]
                \EndFor
                \EndFor
                \State \textsc{Extend Partial State}
                \[ s_r^k = s_{r}^{k,i}[J] \]
                \[ P(J=j, I = i) = \frac{w_{r}^{k,i}[j]}{\sum_{l=1}^{L}\sum_{m=1}^{M}w_r^{k,m}[j]} \]
                \State \textsc{Compute weights:}
                \[
                w_r^k = \frac{1}{ML}\sum\limits_{l=1}^{L}\sum\limits_{m=1}^{M}w_{r}^{k,i}[j]
                \]
                
            \EndFor
        \EndFor
        \State \textbf{Output:} $s_{R}^{1:K}$ , $w_{1:R}^{1:K}$
    \end{algorithmic}
    \label{Hncsmc_algorithm}
\end{algorithm}

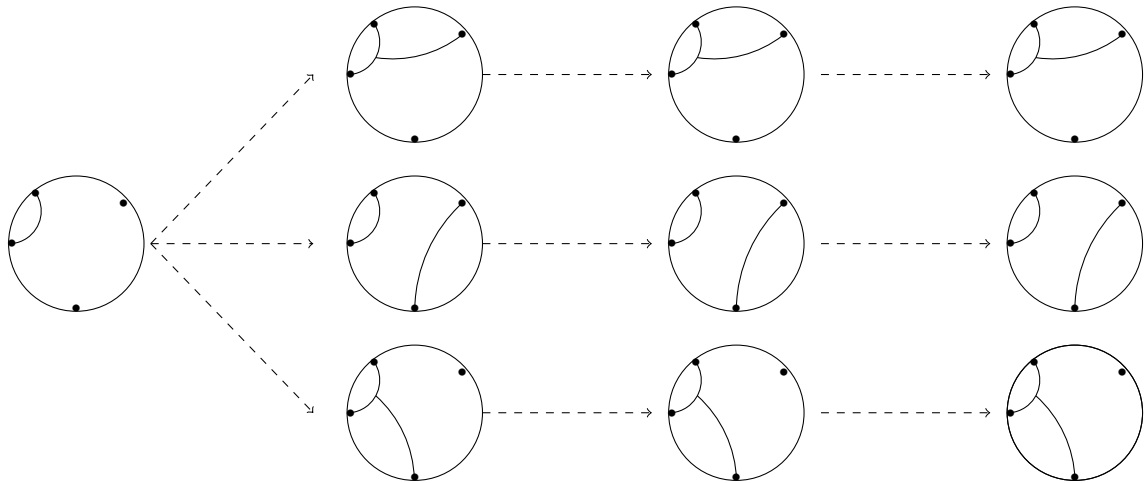
\begin{figure*}[h]
    \centering
    \begin{tikzpicture}[sloped, scale=0.9]

        \tikzset{
            dep/.style={
                circle, minimum size=1, fill=orange!20, draw=orange,
                general shadow={fill=gray!60, shadow xshift=1pt, shadow yshift=-1pt}
            },
            cli/.style={
                circle, minimum size=1, fill=white, draw,
                general shadow={fill=gray!60, shadow xshift=1pt, shadow yshift=-1pt}
            },
            obs/.style={
                circle, minimum size=1, fill=gray!20, draw,
                general shadow={fill=gray!60, shadow xshift=1pt, shadow yshift=-1pt}
            },
            spl/.style={
                cli=1, append after command={
                    node[circle, draw, dotted, minimum size=1.5cm] at (\tikzlastnode.center) {}
                }
            },
            c1/.style={-stealth, very thick, red!80!black},
            v2/.style={-stealth, very thick, yellow!65!black},
            v4/.style={-stealth, very thick, purple!70!black}
        }

        \pgfmathsetmacro{\rowAoffsetY}{4.25};
        \pgfmathsetmacro{\rowBoffsetY}{1.75};
        \pgfmathsetmacro{\rowCoffsetY}{-0.75};

        \pgfmathsetmacro{\colAoffsetX}{-6.5};
        \pgfmathsetmacro{\colBoffsetX}{-1.5};
        \pgfmathsetmacro{\colCoffsetX}{3.25};
        \pgfmathsetmacro{\colDoffsetX}{8.25};

        \pgfmathsetmacro{\xlocA}{-0.95};
        \pgfmathsetmacro{\xlocB}{0};
        \pgfmathsetmacro{\xlocC}{0.7};
        \pgfmathsetmacro{\xlocD}{-0.6};
        
        \pgfmathsetmacro{\ylocA}{0};
        \pgfmathsetmacro{\ylocB}{-0.95};
        \pgfmathsetmacro{\ylocC}{0.6};
        \pgfmathsetmacro{\ylocD}{0.75};
        \node (Resample) at (-4.5, 6) {\textsc{Enumerate Topologies}};
        \node (A) at (\colAoffsetX + \xlocA, \rowBoffsetY + \ylocA) {\tiny $\bullet$};
        \node (B) at (\colAoffsetX + \xlocB, \rowBoffsetY + \ylocB) {\tiny $\bullet$};
        \node (C) at (\colAoffsetX + \xlocC, \rowBoffsetY + \ylocC) {\tiny $\bullet$};
        \node (D) at (\colAoffsetX + \xlocD, \rowBoffsetY + \ylocD) {\tiny $\bullet$};

        \draw (D) arc[start angle=33.87, end angle=-83.90, radius=0.483]; 
        \draw (\colAoffsetX, \rowBoffsetY) circle [radius=1];
        \coordinate (bone) at (\colAoffsetX + 1.1, \rowBoffsetY);

        \node (Propose) at (0.75, 6) {\textsc{Subsample Branch Lengths}};

        \node (AA1) at (\colBoffsetX + \xlocA, \rowAoffsetY + \ylocA) {\tiny $\bullet$};
        \node (BB1) at (\colBoffsetX + \xlocB, \rowAoffsetY + \ylocB) {\tiny $\bullet$};
        \node (CC1) at (\colBoffsetX + \xlocC, \rowAoffsetY + \ylocC) {\tiny $\bullet$};
        \node (DD1) at (\colBoffsetX + \xlocD, \rowAoffsetY + \ylocD) {\tiny $\bullet$};
        
        \draw (DD1) arc[start angle=33.87, end angle=-83.90, radius=0.483]; 
         \node (μ_CC_DD) at (\colBoffsetX + -.575, \rowAoffsetY + 0.25) {};
         \draw (μ_CC_DD) arc[start angle=-98.67, end angle=-52.2, radius=1.65]; 
        
        \draw (\colBoffsetX, \rowAoffsetY) circle [radius=1];
        \coordinate (atwo) at (-3, \rowAoffsetY);
        \coordinate (athree) at (-0.5, \rowAoffsetY);

        \node (AA2) at (\colBoffsetX + \xlocA, \rowBoffsetY + \ylocA) {\tiny $\bullet$};
        \node (BB2) at (\colBoffsetX + \xlocB, \rowBoffsetY + \ylocB) {\tiny $\bullet$};
        \node (CC2) at (\colBoffsetX + \xlocC, \rowBoffsetY + \ylocC) {\tiny $\bullet$};
        \node (DD2) at (\colBoffsetX + \xlocD, \rowBoffsetY + \ylocD) {\tiny $\bullet$};

        \draw (DD2) arc[start angle=33.87, end angle=-83.90, radius=0.483]; 
        \draw (BB2) arc[start angle=178.65, end angle=132.74, radius=2.18];

        \draw (\colBoffsetX, \rowBoffsetY) circle [radius=1];

        \coordinate (btwo) at (-3, \rowBoffsetY);
        \coordinate (bthree) at (-0.5, \rowBoffsetY);

        \node (AA3) at (\colBoffsetX + \xlocA, \rowCoffsetY + \ylocA) {\tiny $\bullet$};
        \node (BB3) at (\colBoffsetX + \xlocB, \rowCoffsetY + \ylocB) {\tiny $\bullet$};
        \node (CC3) at (\colBoffsetX + \xlocC, \rowCoffsetY + \ylocC) {\tiny $\bullet$};
        \node (DD3) at (\colBoffsetX + \xlocD, \rowCoffsetY + \ylocD) {\tiny $\bullet$};
        
        \draw (DD3) arc[start angle=33.87, end angle=-83.90, radius=0.483]; 
        \node (μ_dd3_aa3) at (\colBoffsetX + -0.575, \rowCoffsetY + 0.25) {};]
        \draw (μ_dd3_aa3) arc[start angle=47.98, end angle=1.71, radius=1.72]; 
        \draw (\colBoffsetX, \rowCoffsetY) circle [radius=1];

        \coordinate (ctwo) at (-3, \rowCoffsetY);
        \coordinate (cthree) at (-0.5, \rowCoffsetY);

        \draw[->, dashed] (bone) -- (btwo);
        \draw[->, dashed] (bone) -- (atwo);
        \draw[->, dashed] (bone) -- (ctwo);

        \node (CAA1) at (\colCoffsetX + \xlocA, \rowAoffsetY + \ylocA) {\tiny $\bullet$};
        \node (CBB1) at (\colCoffsetX + \xlocB, \rowAoffsetY + \ylocB) {\tiny $\bullet$};
        \node (CCC1) at (\colCoffsetX + \xlocC, \rowAoffsetY + \ylocC) {\tiny $\bullet$};
        \node (CDD1) at (\colCoffsetX + \xlocD, \rowAoffsetY + \ylocD) {\tiny $\bullet$};
        
        \draw (CDD1) arc[start angle=33.87, end angle=-83.90, radius=0.483]; 
         \node (Cμ_CC_DD) at (\colCoffsetX + -.575, \rowAoffsetY + 0.25) {};
         \draw (Cμ_CC_DD) arc[start angle=-98.67, end angle=-52.2, radius=1.65]; 

        \draw (\colCoffsetX,\rowAoffsetY) circle [radius=1];

        \coordinate (afour) at (2,\rowAoffsetY);
        \draw[->, dashed] (athree) -- (afour);
        
        \node (AA3) at (\colCoffsetX + \xlocA, \rowBoffsetY + \ylocA) {\tiny $\bullet$};
        \node (BB3) at (\colCoffsetX + \xlocB, \rowBoffsetY + \ylocB) {\tiny $\bullet$};
        \node (CC3) at (\colCoffsetX + \xlocC, \rowBoffsetY + \ylocC) {\tiny $\bullet$};
        \node (DD3) at (\colCoffsetX + \xlocD, \rowBoffsetY + \ylocD) {\tiny $\bullet$};

        \draw (DD3) arc[start angle=33.87, end angle=-83.90, radius=0.483]; 
        \draw (BB3) arc[start angle=178.65, end angle=132.74, radius=2.18];
        \draw (\colCoffsetX,\rowBoffsetY) circle [radius=1];

        \coordinate (bfour) at (2, \rowBoffsetY);
        \draw[->, dashed] (bthree) -- (bfour);

        \node (CAA3) at (\colCoffsetX + \xlocA, \rowCoffsetY + \ylocA) {\tiny $\bullet$};
        \node (CBB3) at (\colCoffsetX + \xlocB, \rowCoffsetY + \ylocB) {\tiny $\bullet$};
        \node (CCC3) at (\colCoffsetX + \xlocC, \rowCoffsetY + \ylocC) {\tiny $\bullet$};
        \node (CDD3) at (\colCoffsetX + \xlocD, \rowCoffsetY + \ylocD) {\tiny $\bullet$};
        
        \draw (CDD3) arc[start angle=33.87, end angle=-83.90, radius=0.483]; 
        \node (Cμ_dd3_aa3) at (\colCoffsetX + -0.575, \rowCoffsetY + 0.25) {};]
        \draw (Cμ_dd3_aa3) arc[start angle=47.98, end angle=1.71,radius=1.75];
        \draw (\colCoffsetX,\rowCoffsetY) circle [radius=1];

        \coordinate (cfour) at (2, \rowCoffsetY);
        \draw[->, dashed] (cthree) -- (cfour);

        \node (Weighting) at (5.75, 6) {\textsc{Compute Potentials}};
        \node (DAA1) at (\colDoffsetX + \xlocA, \rowAoffsetY + \ylocA) {\tiny $\bullet$};
        \node (DBB1) at (\colDoffsetX + \xlocB, \rowAoffsetY + \ylocB) {\tiny $\bullet$};
        \node (DCC1) at (\colDoffsetX + \xlocC, \rowAoffsetY + \ylocC) {\tiny $\bullet$};
        \node (DDD1) at (\colDoffsetX + \xlocD, \rowAoffsetY + \ylocD) {\tiny $\bullet$};
        
        \draw (DDD1) arc[start angle=33.87, end angle=-83.90, radius=0.483]; 
         \node (Dμ_CC_DD) at (\colDoffsetX + -.575, \rowAoffsetY + 0.25) {};
         \draw (Dμ_CC_DD) arc[start angle=-98.67, end angle=-52.2, radius=1.65]; 
        \draw (\colDoffsetX,\rowCoffsetY) circle [radius=1];

        \draw (\colDoffsetX, \rowAoffsetY) circle [radius=1];

        \coordinate (afive) at (4.5, \rowAoffsetY);
        \coordinate (asix) at (7, \rowAoffsetY);
        \draw[->, dashed] (afive) -- (asix);

        \node (AA4) at (\colDoffsetX + \xlocA, \rowBoffsetY + \ylocA) {\tiny $\bullet$};
        \node (BB4) at (\colDoffsetX + \xlocB, \rowBoffsetY + \ylocB) {\tiny $\bullet$};
        \node (CC4) at (\colDoffsetX + \xlocC, \rowBoffsetY + \ylocC) {\tiny $\bullet$};
        \node (DD4) at (\colDoffsetX + \xlocD, \rowBoffsetY + \ylocD) {\tiny $\bullet$};

        \draw (DD4) arc[start angle=33.87, end angle=-83.90, radius=0.483]; 
        \draw (BB4) arc[start angle=178.65, end angle=132.74, radius=2.18];

        \draw (\colDoffsetX, \rowBoffsetY) circle [radius=1];

        \coordinate (bfive) at (4.5, \rowBoffsetY);
        \coordinate (bsix) at (7, \rowBoffsetY);
        \draw[->, dashed] (bfive) -- (bsix);

        \node (DAA3) at (\colDoffsetX + \xlocA, \rowCoffsetY + \ylocA) {\tiny $\bullet$};
        \node (DBB3) at (\colDoffsetX + \xlocB, \rowCoffsetY + \ylocB) {\tiny $\bullet$};
        \node (DCC3) at (\colDoffsetX + \xlocC, \rowCoffsetY + \ylocC) {\tiny $\bullet$};
        \node (DDD3) at (\colDoffsetX + \xlocD, \rowCoffsetY + \ylocD) {\tiny $\bullet$};
        
        \draw (DDD3) arc[start angle=33.87, end angle=-83.90, radius=0.483]; 
        \node (Dμ_dd3_aa3) at (\colDoffsetX + -0.575, \rowCoffsetY + 0.25) {};]
        \draw (Dμ_dd3_aa3) arc[start angle=47.98, end angle=1.71,radius=1.75];

        \draw (\colDoffsetX, \rowCoffsetY) circle [radius=1];

        \coordinate (cfive) at (4.5, \rowCoffsetY);
        \coordinate (csix) at (7, \rowCoffsetY);
        \draw[->, dashed] (cfive) -- (csix);
    \end{tikzpicture}
    \caption{Overview of the \textsc{Ncsmc} framework in the Poincaré disk. To extend a partial state for a single sample, we enumerate all $\binom{N-r}{2}$ possible lookahead topologies, subsample branch lengths by drawing $M$ points from a $\mathcal{WN}$ distribution, and compute sub-weights (potentials). One candidate is then sampled from these states based on its score. In the example shown, $N = 4$ leaf nodes are embedded, $r = 1$, yielding three topologies, and $M = 1$ Monte Carlo sample is used for branch lengths.}
\end{figure*}

We present an overview of the Hyperbolic Nested Sequential Monte Carlo (\textsc{H-Ncsmc}) framework, described in detail in Algorithm \ref{Hncsmc_algorithm}.

\textsc{H-Ncsmc} iterates through rank events (Step 4) and Monte Carlo samples (Step 5), performing a \textsc{Resample} step. For each sample, \textsc{H-Ncsmc} enumerates all $\binom{N-r}{2}$ possible one-step-ahead topologies, and samples corresponding $M$ sub-branch lengths using Algorithm 1 (Sampling Parent Node Embeddings in Hyperbolic Space). For each of the $j= 1, \dots, \binom{N-r}{2}$ one-step-ahead topologies, corresponding to Monte Carlo sample $k$, the two coalescing embedded points $\bv_{L,r}^k[j]$ and $\bv_{R,r}^k[j]$ represent the left and right child nodes. The closest point on the geodesic between $\bv_{L,r}^k[j]$ and $\bv_{R,r}^k[j]$ to the origin is identified (Step 10). The parent node is then sampled in the tangent plane at the origin (Step 11) and mapped via parallel transport to its mean (Step 12). Afterward, the exponential map is used to project the samples back onto the manifold (Step 13). The left and right branch lengths $\beta_{L,r}^{k,m}[j]$, $\beta_{R,r}^{k,m}[j]$ are computed using the geodesic distance between the sampled parent and the child nodes (Step 14). Each sampled look-ahead state $s_r^{k,m}[j]$ is assigned a sub-weight or potential function (Step 15). The partial state for $s_r^k$ is then extended by choosing one of the $\binom{N-r}{2}$ topologies and corresponding branch lengths based on their weights (Step 18). For each sample, we compute its weight by averaging over all sub-weights (Step 19).

In a slight abuse of notation, $i \in \{L,R\}$ refers to the left/right child nodes in Step 15, while $i \in \{1,\cdots,M\}$ refers to the sub-sampled parent embeddings in Steps 18 and 19. Although Algorithm \ref{Hncsmc_algorithm} is explicitly outlined, it is efficiently implemented in a fully vectorized form across $K$ and $M$. The marginal likelihood estimator $\widehat{\mathcal{Z}}_{H-NCSMC}$ and the evidence lower bound (ELBO) $\mathcal{L}_{H-NCSMC}$ for \textsc{Vi}  are defined below:

\begin{equation}
    \widehat{\mathcal{Z}}_{H-NCSMC} \coloneqq \|\widehat{\pi}_{R}\| = \prod\limits_{r=1}^{R}\left(\frac{1}{K} \sum\limits_{k=1}^{K}w_{r}^k\right)\,, \qquad \mathcal{L}_{H-NCSMC} \coloneqq \mathbb{E} \left[\log \widehat{\mathcal{Z}}_{H-NCSMC} \right]
\end{equation}
\clearpage

\begin{proposition}
    The particles $s_r^k$ and weights $w_r^k$ generated by Algorithm~\ref{Hncsmc_algorithm} are properly weighted for $\frac{\pi(s_r) \nu^-(s_{r-1})}{\pi(s_{r-1})}$.
\label{thm:pw2_}
\end{proposition}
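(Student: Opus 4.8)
The plan is to mirror, essentially verbatim, the proof of Proposition~\ref{thm:pw2}, since Algorithm~\ref{Hncsmc_algorithm} is just the fully written-out version of Algorithm~\ref{hncsmc1}: the embedding, parallel transport, exponential map and geodesic branch-length steps of Algorithm~1 are inlined, while the potentials and the extension/resampling rules are identical. First I would unfold the expectation over the extension step (Steps~18--19). Since $s_r^k = s_r^{k,I}[J]$ with $\mathbb{P}(J=j, I=i) = w_r^{k,i}[j]\big/\sum_{l}\sum_{m} w_r^{k,m}[l]$ and the reported weight is the average $w_r^k = \tfrac{1}{ML}\sum_{l}\sum_{m} w_r^{k,m}[l]$, the product $w_r^k \cdot \mathbb{P}(J=j,I=i)$ collapses to $\tfrac{1}{ML}\, w_r^{k,i}[j]$, so that
\[
\mathbb{E}\!\left[w_r^k\, h(s_r^k)\right] = \frac{1}{ML}\sum_{j=1}^{L}\sum_{i=1}^{M}\mathbb{E}\!\left[w_r^{k,i}[j]\, h(s_r^{k,i}[j])\right].
\]
This reduces the claim to analyzing a single enumerated look-ahead potential, exactly as in Proposition~\ref{thm:pw}.

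Second, I would evaluate $\mathbb{E}[w_r^{k,i}[j]\, h(s_r^{k,i}[j])]$ for a fixed pair $(i,j)$ by writing it as an integral against the wrapped-normal proposal on the sampled parent embedding $\dot{\bv}_r^{k,i}[j]$, substituting the definition of the sub-weight from Step~15 (which carries the reciprocal proposal density, including the uniform-over-topologies factor, together with the Jacobian $\det(\partial\beta_{i,r}^{k,m}/\partial\dot{\bv}^{k,m}[j])$), and then changing variables from the embedding coordinates to the pair of branch lengths. As in Appendix~\ref{branch_lengths}, the embedding-to-branch-length map is at most two-to-one, so the change of variables is handled by summing the two intersection contributions ($\alpha$ and $\tilde{\alpha}$); the Jacobian determinant inside the potential is precisely what cancels the one produced by the substitution. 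Summing over $j = 1,\dots,L$ then reconstitutes the target's sum over next-step topologies $\sum_\tau$, and the $\tfrac1{ML}$ prefactor is consumed by the $M$ i.i.d.\ Monte Carlo copies and the $L$ arising from the reciprocal uniform topology proposal, leaving
\[
\mathbb{E}\!\left[w_r^k\, h(s_r^k)\right] = \sum_{\tau}\int h(\beta_{i,r},\tau)\,\frac{\pi(\beta_{i,r},\tau)\,\nu^-(s_{r-1}^{a_{r-1}^k})}{\pi(s_{r-1}^{a_{r-1}^k})}\,\mathrm{d}\beta_{i,r} = \int h(s_r)\,\frac{\pi(s_r)\,\nu^-(s_{r-1}^{a_{r-1}^k})}{\pi(s_{r-1}^{a_{r-1}^k})}\,\mathrm{d}s_r.
\]
Non-negativity of $w_r^k$ is immediate, being an average of products of nonnegative density ratios with an absolute Jacobian determinant.

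The main obstacle is purely bookkeeping: one must verify that the normalizer $\sum_{l}\sum_{m} w_r^{k,m}[l]$ in the extension probability is exactly $ML$ times the reported weight $w_r^k$, so the two cancel with no residual factor of $M$ or $L=\binom{N-r}{2}$, and one must carefully reconcile the enumeration sum over the $L$ look-ahead topologies with the target's sum over topologies (this is where the uniform topology-proposal factor inside the sub-weight matters). A secondary care point, already flagged in the text, is the overloaded index $i$: it denotes the left/right child label inside the potential and Jacobian but the selected Monte Carlo index in the extension and weight-update steps; keeping these distinct (renaming to $m$ where appropriate, as in the displays above) is needed for the change-of-variables step to read correctly. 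Once these are in place, the remaining computation is identical to the one in Proposition~\ref{thm:pw}, applied inside the sum over enumerated topologies.
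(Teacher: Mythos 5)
Your proposal is correct and follows essentially the same route as the paper's proof: unfold the expectation over the extension indices $(I,J)$ so that $w_r^k\,\mathbb{P}(J=j,I=i)$ collapses to $\tfrac{1}{ML}w_r^{k,i}[j]$, then evaluate each sub-weight expectation as an integral against the wrapped-normal proposal and change variables from embeddings to branch lengths via the Jacobian factor built into the potential. If anything, you are slightly more explicit than the paper about the bookkeeping it glosses over (the reconstitution of $\sum_\tau$ from the $L$ enumerated topologies and the uniform proposal factor, the two-to-one embedding-to-branch-length map, and the non-negativity requirement), but the decomposition and key cancellations are identical.
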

\begin{proof}
\begin{align}
    \mathbb{E}[w_r^k h(s_r^k)] &= \mathbb{E}\left[w_r^k \cdot h(s_r^{k,I}[J])\right]   \nonumber \\
    &= \mathbb{E}\left[\sum_{j=1}^L \sum_{i=1}^M w_r^k \frac{w_r^{k,i}[j]}{\sum_l \sum_m w_r^{k,m}[l]} h(s_r^{k,i}[j])\right] \nonumber \\
    &= \frac{1}{ML} \sum_{j=1}^L \sum_{i=1}^M\mathbb{E}\left[w_r^{k,i}[j] \cdot h(s_r^{k,i}[j])\right] \nonumber \\
    &= \mathbb{E}\left[w_r^{k,i}[j] \cdot h(s_r^{k,i}[j])\right] \nonumber \\
  \intertext{Replacing index $i$ for index $m$ so that $i$ can be used to denote left and right branch lengths $\beta_i$:}
    &= \sum_{\tau} \int h( \dot{\bv}_r^{k,m}[j], \tau_r^{k}[j]) 
    \dfrac{\pi(\dot \bv_{r}^{k,m}[j], \tau_r^k)\cdot \nu^{-}(\dot \bv_{r-1}^{a_{r-1}^k}, \tau_{r-1}^{a_{r-1}^k})}{\pi(\dot \bv_{r-1}^{a_{r-1}^k}, \tau_{r-1}^{a_{r-1}^k})}\,\cdot \, \text{det} \left( \dfrac{\partial \beta_{i,r}^{k,m}(\dot \bv^{k,m}[j])}{\partial \dot \bv^{k,m}[j]}\right)
  d \dot{\bv}_r^{k,m}[j] 
  \nonumber \\
  &= \sum_{\tau} \int h(\beta_{i,r}^{k,m}[j], \tau_r^k[j]) 
  \dfrac{\pi(\beta_{i,r}^{k,m}[j], \tau_r^k) \nu^{-}(\beta_{i,r-1}^{a_{r-1}^k}, \tau_{r-1}^{a_{r-1}^k})}{\pi(\beta_{i,r-1}^{a_{r-1}^k}, \tau_{r-1}^{a_{r-1}^k})} 
  \, \mathrm{d}\beta_{i,r}^{k,m}[j] \nonumber\\
 &= \int h(s_r^k) \frac{\pi(s_r^k) \nu^-(s_{r-1}^{a_{r-1}^k})}{\pi(s_{r-1}^{a_{r-1}^k})} \, \mathrm{d}s_r^k \nonumber
\end{align}
\end{proof}

\subsection{Implementation Details}
\label{implementation_details}

The fast execution time of \textsc{H-Vcsmc} is a result of an efficient PyTorch implementation of the method. With the exception of the resampling step, are fully vectorized across particles and sites. The high degree of parallelization allows the method to fully saturate high-end \textsc{Gpu}s like the \textsc{Nvidia} A100.

Maximizing \textsc{Gpu} throughput also requires the method to work under \texttt{float32} precision. The primary numeric challenge is computing likelihoods via Felsenstein's pruning algorithm. Namely, each internal node is associated with a distribution $\mathbf{L} = (\exp(\mathbf Q \beta_\text{L})) \mathbf{L}_\text{L} \cdot (\exp(\mathbf Q \beta_\text{R})) \mathbf{L}_\text{R}$ over the alphabet, computed via the distributions ($\mathbf{L}_\text{L}, \mathbf{L}_\text{R}$) and branch lengths ($\beta_\text{L}, \beta_\text{R}$) of the left and right child. As a result of the recursive computation, $\mathbf L$ at the root node contains very small entries, leading to numeric instability in the forward and backward passes of the model. Performing these calculations in log space alleviates the numeric issues under \texttt{float32} precision.

We provide a modular PyTorch implementation of both \textsc{H-Vcsmc} and \textsc{H-Vncsmc} for comparative evaluation. The framework allows for various proposal distributions to be utilized. Additionally, we re-implemented \textsc{Vcsmc} and \textsc{Vncsmc} defining their proposal distributions and utilizing the same training code.

\subsubsection{Decoding Embeddings into Elements of the Rate Matrix}

An $A \times A$ rate matrix $\bQ$ can be factored into $A$ holding times $h_i$ and $A$ stationary probabilities $s_i$. A multi-layer perceptron maps embeddings to these $2A$ parameters. The softmax is used to obtain a valid stationary distribution, and the holding times are normalized so their mean is one. This normalization prevents the rate matrix from exploding and influencing the evolutionary time scale.

The embeddings are first feature-expanded so the rate matrix can depend more directly on an embedding's distance from the origin. In particular, embeddings are expanded into $D+1$ dimensions via
\begin{equation}
    \phi(p) = \qty(\norm{p}, \frac{p_1}{\norm{p}}, \frac{p_2}{\norm{p}}, \dots) \; .
\end{equation}

Constructing the full rate matrix involves first computing
\begin{equation}
\bQ' =
\bmqty{
    1/h_1  & \cdots & 1/h_1  \\
    \vdots & \ddots & \vdots \\
    1/h_A  & \cdots & 1/h_A
}
\bmqty{
    s_1 &        &     \\
        & \ddots &     \\
        &        & s_A \\
}
\; ,
\end{equation}
then taking $\bQ$ to be $\bQ'$, but with diagonal entries equal to the negative sum of each row's off-diagonal entries. By factorizing $\bQ$, the stationary probabilities required by Felsenstein's pruning algorithm are directly available.

\begin{figure}
    \centering
    \includegraphics[width=0.85\linewidth]{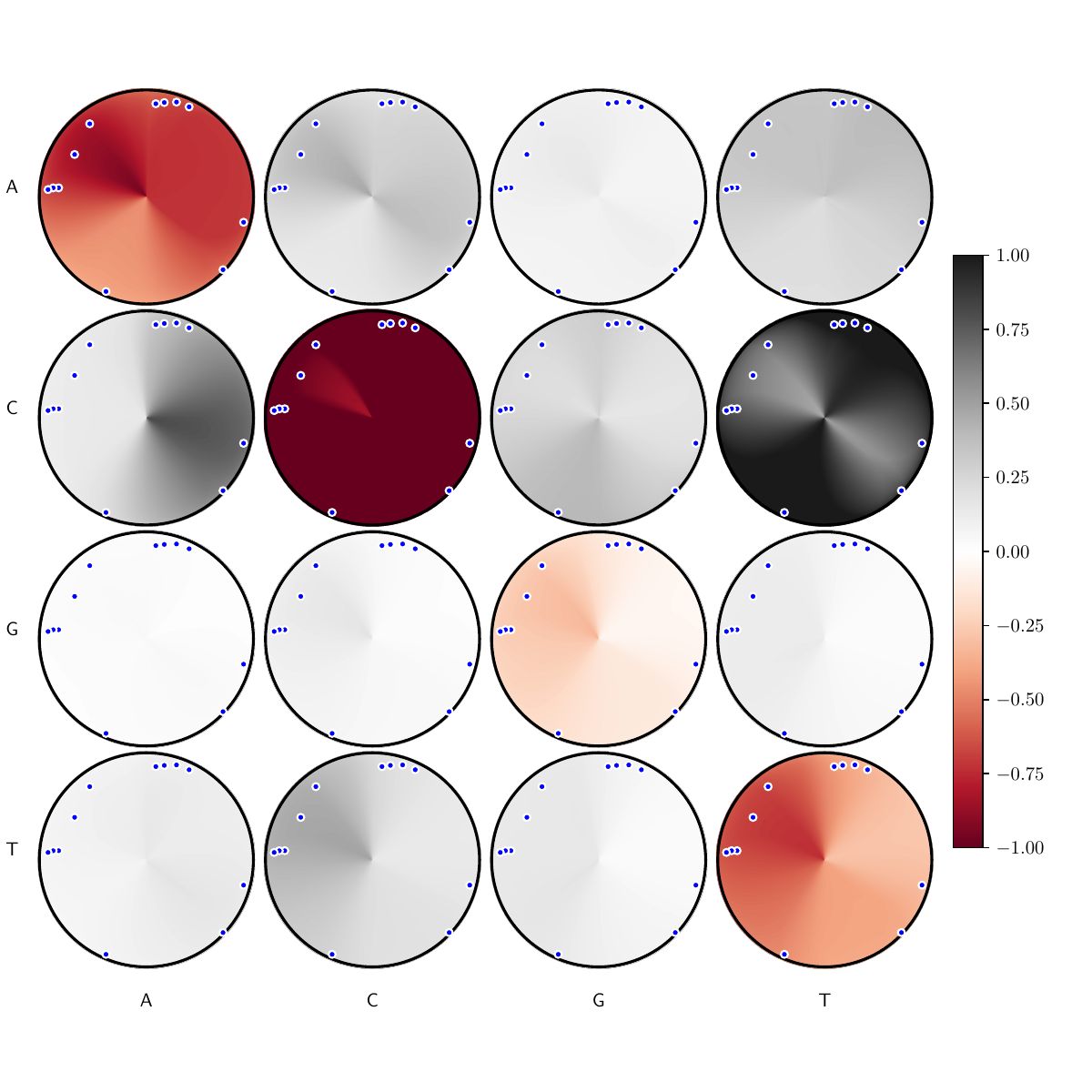}
    \caption{Visualization of the $Q$ matrix entries as a function of position in the Poincaré disk. The matrix entry in row $i \in \qty{A,G,C,T}$, column $j \in \qty{A,G,C,T}$ is constructed from a holding time and stationary probability: $Q_{ij} = (1/h_i) s_j$. The diagonals are negative such that the rows sum to zero. Embedded sequences from the Primates dataset are shown as blue points. The Q matrix varies smoothly as a function of radius and polar angle, suggesting that the learned function does not overfit.}
    \label{fig:q_matrix_parameter_space}
\end{figure}

\subsubsection{Deferring Branch Length Sampling}

A large share of the variance in the gradient estimate can be attributed to the randomness in sampling branch lengths in the proposal step (i.e., sampling the location of the parent nodes). This variance has an adverse effect on the convergence speed of \textsc{H-Vcsmc} and \textsc{H-Vncsmc}. A convenient feature of these two methods, however, is that they can fit model parameters governing the proposal distribution and rate matrix without sampling branch lengths. One can substitute in a modified proposal that takes the parent node embedding directly to be the point on the geodesic closest to the origin. Embeddings can be optimized first under this deterministic regime, then ``fine-tuned'' after switching back to the proper proposal that includes parent node sampling. Our experiments show that for larger datasets, performing these two training phases separately leads to better likelihood estimates.

\subsubsection{Memoizing Over Topologies}

Under the modified proposal that produces deterministic branch lengths, two forests $s_r^i$ and $s_r^j$ with matching topologies will have matching likelihoods $\pi(s_r^i) = \pi(s_r^j)$. Therefore, the computation of the $K$ important weights via Eq.~$\eqref{eq:weights}$ can be memoized for partial states with identical topologies, where the memoization key comes from a tree hashing algorithm that is invariant to child ordering. This memoization speeds up the forward pass and significantly reduces the size of the computation graph that must be saved in memory for computing gradients of Eq.~$\eqref{eq:smcmarginallikelihood}$.

\begin{figure}
\centering
\begin{tikzpicture}[scale=6]

    \draw[thick, black] (0,0) circle [radius=1];

    \draw[fill=blue, draw=black] (.7, -.46) circle (0.1pt) node[right] {\small Macaca\_fuscata};
    \draw[fill=blue, draw=black] (.67, -.53) circle (0.1pt) node[below] {\small M\_mulatta};
    \draw[fill=blue, draw=black] (.68, -.48) circle (0.1pt);
    
    \draw (.7, -.46) arc[start angle=130.62, end angle=139.37, radius=.185];
    \draw (.67, -.53) arc[start angle=176.31, end angle=161.06, radius=.192];

    \draw[fill=blue, draw=black] (.8, -.39) circle (0.1pt) node[right] {\small M\_fascicularis};
    \draw[fill=blue, draw=black] (.7, -.42) circle (0.1pt);

    \draw (.68, -.48) arc[start angle=170.45, end angle=152.68, radius=.204];
    \draw (.8, -.39) arc[start angle=93.85, end angle=119.54, radius=.235];

    \draw[fill=blue, draw=black] (0.84, -0.31) circle (0.1pt) node[right] {\small M\_sylvanus};
    \draw[fill=blue, draw=black] (.72, -.34) circle (0.1pt);

    \draw (0.84, -0.31) arc[start angle=91.88, end angle=116.19, radius=.294];
    \draw (.7, -.42) arc[start angle=176.297, end angle=155.63, radius=.2298];

    \draw[fill=blue, draw=black] (.3, -.76) circle (0.1pt) node[below right] {\small Humans};
    \draw[fill=blue, draw=black] (.11, -.85) circle (0.1pt) node[below right] {\small Pan};
    \draw[fill=blue, draw=black] (0.18, -0.76) circle (0.1pt);

    \draw (.3, -.76) arc[start angle=76.09, end angle=103.9, radius=.2497];
    \draw (.11, -.85) arc[start angle=153.85, end angle=130.4, radius=.28];

    \draw[fill=blue, draw=black] (0.08, -0.89) circle (0.1pt) node[below left] {\small Gorilla};
    \draw[fill=blue, draw=black] (-0.04, -0.71) circle (0.1pt);

    \draw (0.18, -0.76) arc[start angle=84.55, end angle=95.44, radius=.2636];
    \draw (0.08, -0.89) arc[start angle=168.90, end angle=149.02, radius=.403];

    \draw[fill=blue, draw=black] (-0.23, -0.85) circle (0.1pt) node[right] {\small Pongo};
    \draw[fill=blue, draw=black] (-.14, -.68) circle (0.1pt);

    \draw (-0.04, -0.71) arc[start angle=107.92, end angle=144.84, radius=.37];
    \draw (-0.04, -0.71) arc[start angle=65.26, end angle=81.33, radius=0.3735];

    \draw[fill=blue, draw=black] (-0.31, -0.87) circle (0.1pt) node[above left] {\small Hylobates};
    \draw[fill=blue, draw=black] (.13, -.76) circle (0.1pt);

    \draw (-0.04, -0.71) arc[start angle=88.37, end angle=58.84, radius=0.3476];
    \draw (-0.31, -0.87) arc[start angle=151.88, end angle=124.47, radius=0.538];

    \draw[fill=blue, draw=black] (.2, -.39) circle (0.1pt);
    \draw (-.14, -.68) arc[start angle=144.49, end angle=116.433, radius=.922];
    \draw (.72, -.34) arc[start angle=79.1, end angle=111.88, radius=.926];

    \draw[fill=blue, draw=black] (-0.78, -0.48) circle (0.1pt) node[below right] {\small Saimiri\_sciureus};
    \draw[fill=blue, draw=black] (-0.08, -0.28) circle (0.1pt);

    \draw (-0.78, -0.48) arc[start angle=118.577, end angle=93.31, radius=1.66];
    \draw (.2, -.39) arc[start angle=63.059, end angle=74.045, radius=1.57];

    \draw[fill=blue, draw=black] (-0.00, 0.80) circle (0.1pt) node[above] {\small Lemur\_catta};
    \draw[fill=blue, draw=black] (0.20, 0.87) circle (0.1pt) node[below right] {\small Tarsius\_syrichta};

    \draw[fill=blue, draw=black] (-0.04, 0.01) circle (0.1pt);
    \draw (-0.04, 0.01) arc[start angle=-7.8275, end angle=-9.027, radius=13.44];
    \draw (-0.04, 0.01) arc[start angle = -4.746, end angle = -1.05,radius = 12.26];

    \draw[fill=blue, draw=black] (0,0) circle (0.1pt);
    \draw (0,0) -- (-0.04, 0.01);
    \draw (0,0) -- (0.20, 0.87);

\end{tikzpicture}
\caption{The maximum likelihood phylogeny inferred by \textsc{H-Vcsmc} on the Primates dataset, with a topology consistent with that recovered by MrBayes. Monkeys, hominids, and prosimians are partitioned into distinct clades, each occupying separate regions of the Poincaré disk.}
\end{figure}
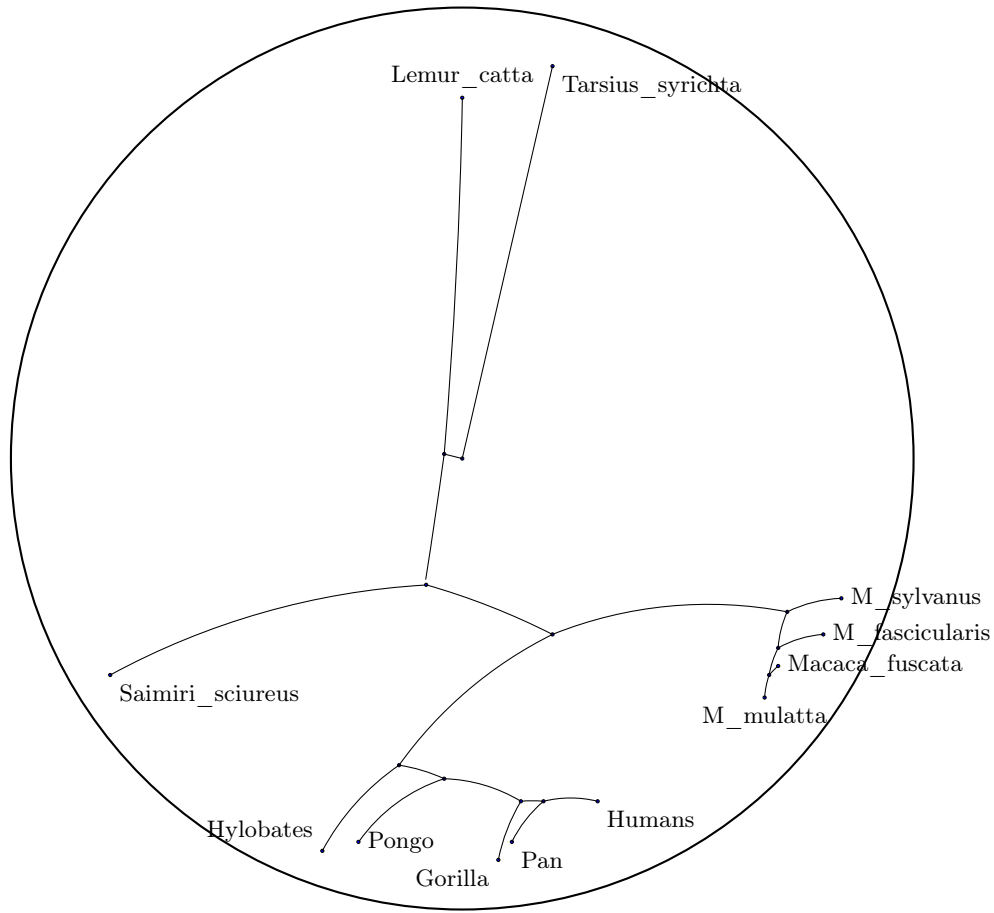

\end{document}